\documentclass{article}

\usepackage{microtype}
\usepackage{graphicx}
\usepackage{subcaption}
\usepackage{booktabs} 

\usepackage{hyperref}

\usepackage[preprint]{icml2026}

\usepackage{amsmath}
\usepackage{amsfonts}
\usepackage{amssymb}
\usepackage{amsthm}
\usepackage{mathtools}
\usepackage{dsfont}

\usepackage[capitalize,noabbrev]{cleveref}
\crefname{figure}{Figure}{Figures}
\crefname{equation}{}{}

\usepackage{wrapfig,lipsum,booktabs}

\usepackage{thm-restate}

\makeatletter
\newcommand{\IfRestatedTF}[2]{\ifthmt@thisistheone #2\else #1\fi}
\makeatother

\theoremstyle{plain}
\newtheorem{theorem}{Theorem}[section]

\newtheorem{lemma}[theorem]{Lemma}

\theoremstyle{definition}

\theoremstyle{remark}

\DeclareMathOperator*{\argmax}{arg\,max}

\newcommand{\E}{\mathbb{E}}

\usepackage{graphicx}
\graphicspath{{./img/}}
\newcommand{\Sim}{\mathrm{sim}}
\newcommand{\Tan}{\mathrm{Tan}}
\newcommand{\Cos}{\mathrm{Cos}}
\newcommand{\Bit}{\mathrm{Bit}}
\newcommand{\bx}{\boldsymbol{x}}
\newcommand{\bX}{\boldsymbol{X}}
\newcommand{\by}{\boldsymbol{y}}
\newcommand{\bY}{\boldsymbol{Y}}
\newcommand{\HRAtOne}{{\mathrm{HR}@1}}
\newcommand{\HRAtk}{{\mathrm{HR}@k}}
\newcommand{\indicator}{\mathds{1}}

\icmltitlerunning{Small molecule retrieval from tandem mass spectrometry: what are we optimizing for?}

\usepackage{todonotes}
\begin{document}

\twocolumn[
  \icmltitle{Small molecule retrieval from tandem mass spectrometry: what are we optimizing for?}



  \icmlsetsymbol{equal}{*}

  \begin{icmlauthorlist}
    \icmlauthor{Gaetan De Waele}{1}
    \icmlauthor{Marek Wydmuch}{2,3}
    \icmlauthor{Krzysztof Dembczyński}{3}
    \icmlauthor{Wojciech Kotłowski}{3}
    \icmlauthor{Willem Waegeman}{1}
  \end{icmlauthorlist}

  \icmlaffiliation{1}{Department of Data Analysis and Mathematical Modelling, Ghent University, Belgium}
  \icmlaffiliation{2}{Snowflake AI Research}
  \icmlaffiliation{3}{Poznan University of Technology}

  \icmlcorrespondingauthor{Gaetan De Waele}{gaetan.dewaele@ugent.be}

  \icmlkeywords{molecule identification, mass spectrometry, loss functions, Bayes-optimal prediction}

  \vskip 0.3in
]



\printAffiliationsAndNotice{}  

\begin{abstract}
One of the central challenges in the computational analysis of liquid chromatography-tandem mass spectrometry (LC-MS/MS) data is to identify the compounds underlying the output spectra.
In recent years, this problem is increasingly tackled using deep learning methods.
A common strategy involves predicting a molecular fingerprint vector from an input mass spectrum, which is then used to search for matches in a chemical compound database.
While various loss functions are employed in training these predictive models, their impact on model performance remains poorly understood.
In this study, we investigate commonly used loss functions, deriving novel regret bounds that characterize when Bayes-optimal decisions for these objectives must diverge.
Our results reveal a fundamental trade-off between the two objectives of (1) fingerprint similarity and (2) molecular retrieval.
Optimizing for more accurate fingerprint predictions typically worsens retrieval results, and vice versa.
Our theoretical analysis shows this trade-off depends on the similarity structure of candidate sets, providing guidance for loss function and fingerprint selection.

\end{abstract}

\section{Introduction}

Liquid chromatography-tandem mass spectrometry (LC-MS/MS) is a standard analytical approach applied towards the identification of small molecules in biological samples (i.e., metabolites)~\citep{de2022good}.
The technique first separates compounds (LC), after which they are ionized and measured based on mass (MS$^1$).
Finally, the ions are fragmented (MS$^2$).
As a result, each selected precursor ion (i.e., molecule) in a sample produces a spectrum of fragments, each represented by a peak of a certain intensity and mass over charge (\textit{m/z}).
The characterization of an unknown compound is, hence, based on the fragmentation patterns of its ions.
One of the central ideas of computational mass spectrometry is to develop methods to characterize the molecular compound underlying an MS$^2$ spectrum~\citep{stravs2024decade}.
This task is complicated due to, among others: (1) the complexity and diversity of chemical structures, (2) the variability of fragmentation behavior, and (3) limitations of reference spectral databases.

From a computational perspective, compound identification approaches fall into one of three categories~\citep{butler2023ms2mol}.
The first category entails comparing the obtained fragmentation mass spectrum to a reference database of molecule-spectrum pairs, commonly called \textit{spectral library matching}~\citep{huber2021ms2deepscore}.
The second type of method involves predicting some molecular property (vector), which can then be used to index into a molecular database, commonly called \textit{molecular retrieval}~\citep{duhrkop2015searching}.
The third and final approach is to predict the molecular structure directly, using some structured output or generative modeling approach.
This last method is commonly called \textit{de novo structure elucidation}~\citep{stravs2022msnovelist, butler2023ms2mol,bohde2025diffms}.

\begin{figure}[t!]
    \begin{center}
	\includegraphics[width=0.75\linewidth]{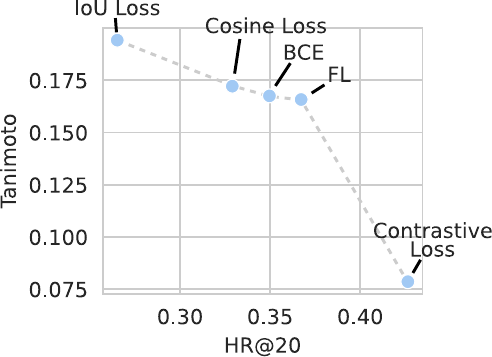}
    \end{center}
	\caption{
		Empirically, different loss functions form a pareto front trading off fingerprint similarity (y-axis) and retrieval performance (x-axis). See \cref{sec:msmolemainres} for experimental details. Remark that maximizing Tanimoto similarity is the same as minimizing 
        IoU loss. 
	}
    \vspace{-0.5cm}
	\label{fig:inverse_corr}
\end{figure}

In this study, the focus is on retrieving compounds from a molecular database.
A paradigmatic approach to this problem is to predict molecular fingerprints: sparse (binary) vectors encoding chemical substructures, which are then compared to candidate molecules via similarity functions such as Tanimoto or cosine similarity~\citep{cereto2015molecular, bajusz2015tanimoto}.
Despite growing interest in this area, there is a limited understanding of how best to predict fingerprints, or---in other words---how the choice of loss function affects the final model.

{\bf Contribution.} In the present study, we establish a fundamental trade-off between two objectives that are often conflated: (1) predicting accurate fingerprints, and (2) retrieving the correct molecule.
In other words: the "best" fingerprint prediction is not necessarily the most discriminative one.
We first demonstrate this trade-off empirically through systematic experiments on the MassSpecGym benchmark~\citep{bushuiev2024massspecgym}, showing that different loss functions form a Pareto front (\cref{fig:inverse_corr}).
We then explain this phenomenon theoretically: adopting a decision-theoretic perspective, we derive novel regret bounds characterizing when Bayes-optimal decisions for similarity metrics must diverge from those for retrieval metrics.
Our analysis reveals that the severity of this trade-off depends on the 'similarity band' of candidate sets.
Because this quantity is computable from data, our bounds deliver practical guidelines w.r.t.\ model and fingerprint selection for the next generation of computational metabolomics methods.

\section{Problem setting}

In this section, we first define the prediction setting and the metrics of interest (\cref{sec:eval_fp}).
Afterwards, commonly used loss functions to predict fingerprints are discussed (\cref{sec:loss_fp}).

\subsection{Predicting fingerprints} \label{sec:eval_fp}

Let the input domain that consists of mass spectra be denoted by $\mathcal{X}$. Let the output domain that consists of molecules be denoted by $\mathcal{Y}$.
Here, the output domain of molecules constitutes of all chemically feasible $m$-dimensional molecular fingerprints (i.e., $\mathcal{Y} \subset\{0,1\}^m$).
Let us denote a dataset of mass spectrum--molecule pairs by $\mathcal{D} = \{(\boldsymbol{x}^{(i)}, \boldsymbol{y}^{(i)})\}_{i=1}^n$, where each $\boldsymbol{x}^{(i)}\in\mathcal{X}$ and $\boldsymbol{y}^{(i)}\in\mathcal{Y}$ denote an input mass spectrum and the corresponding fingerprint of the ground truth molecule.
For each input $i$, one can define a set of retrieval candidate fingerprints $\mathcal{C}^{(i)}=\{\boldsymbol{c}_1^{(i)}, \dots, \boldsymbol{c}_l^{(i)}\}$, with $\boldsymbol{c}_j^{(i)} \in \{0,1\}^m$ for all $j\in \{1,\dots,l\}$.
Retrieval candidates can be drawn from a database of chemicals based on prior information on the sample.
For example, candidate molecules can be defined as (1) the subset of known structures with equal mass to the molecule underlying the spectrum, or (2) the subset of known structures sharing the same molecular formula to the true molecule.
The former is (typically) known through MS$^1$ information, while the latter can usually be obtained by spectral processing pipelines~\citep{duhrkop2019sirius}.
By definition, the ground truth fingerprint $\boldsymbol{y}^{(i)}$ should be part of the candidate set $\mathcal{C}^{(i)}$.

The general goal of this study is to formulate deep learning models that predict fingerprints $\hat{\boldsymbol{y}}^{(i)}\in[0,1]^m$.
Let us denote the fingerprint predictor as $f$, producing fingerprint predictions according to $
\hat{\boldsymbol{y}}^{(i)} = f(\boldsymbol{x}^{(i)}) = \sigma(\boldsymbol{W}_\text{out} E(\boldsymbol{x}^{(i)}))$,  
where all but the last layers are included in the embedder $E$, which can be any neural network function embedding the spectra to a $d$-dimensional space.
A last linear layer, parameterized by a learnable $\boldsymbol{W}_\text{out} \in \mathbb{R}^{m \times d}$, then projects the embedding to fingerprint space, upon which a sigmoid operation $\sigma$ is applied. An overview of this predictive framework is visualized in \cref{fig:msmolemain}A.

The performance of fingerprint prediction models is usually assessed through both (1) evaluating the accuracy of predicted fingerprints, and (2) evaluating their performance in retrieving the ground truth molecule from a set of candidate fingerprints~\citep{bushuiev2024massspecgym, goldman2023annotating}.
Fingerprint accuracy is usually assessed through the Tanimoto similarity~\citep{bajusz2015tanimoto}, a metric also known as the Jaccard index and the intersection-over-union (IoU):
\begin{equation*}
U_\Tan \big(\boldsymbol{y}^{(i)},\boldsymbol{\hat{y}}^{(i)} \big)
    =  \frac{\hat{\boldsymbol{y}}^{(i)\top} \boldsymbol{y}^{(i)}}{\|\hat{\boldsymbol{y}}^{(i)}\|_1 + \|\boldsymbol{y}^{(i)}\|_1 - \hat{\boldsymbol{y}}^{(i)\top} \boldsymbol{y}^{(i)}},
\end{equation*}
where $\|\cdot\|_1$ denotes the L1 norm.  While the formula does not strictly require it, Tanimoto similarities are usually computed over two binary vectors.
To this end, predictions are (usually) first binarized by defining a threshold (usually 0.5)~\citep{goldman2023annotating}.

\begin{figure*}[t!]
	\centering
	\includegraphics[width=0.80\linewidth]{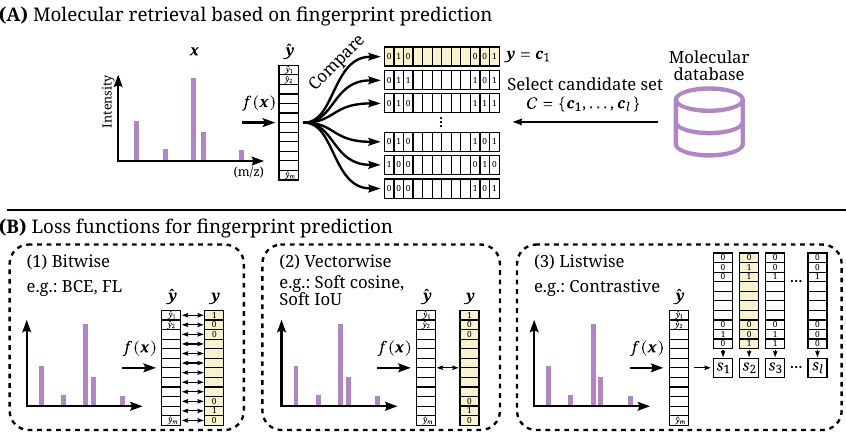}
	\caption{
		\textbf{(A)} General framework used in this study.
		In this figure, all superscript notations indicating the sample index $(i)$ are omitted.
		A model $f$ predicts a fingerprint $\hat{\boldsymbol{y}}^{(i)}\in[0,1]^m$ from an input spectrum $\boldsymbol{x}^{(i)}$.
		The prediction is then compared to a set of fingerprints $\mathcal{C}^{(i)}$ derived from candidate molecules in a database.
		The true fingerprint $\boldsymbol{y}^{(i)}$ is assumed to be part of the candidate set.
		\textbf{(B)} Loss functions for training $f(\cdot)$ generally fall into one of three categories: (1) bitwise losses (e.g., binary cross entropy or binary focal loss), (2) vectorwise losses (e.g., soft cosine loss or soft IoU loss), or (3) listwise losses (e.g., contrastive loss).
	}
	\label{fig:msmolemain}
\end{figure*}

To assess retrieval performance, the MassSpecGym benchmark employs the hit rate @ $k$~\citep{bushuiev2024massspecgym}.
To compute this score, a set of matching scores $\mathcal{S}^{(i)}$ is first computed: $ \mathcal{S}^{(i)} = \{s_1^{(i)}, \dots, s_l^{(i)}\}$, where
$s_j^{(i)} = \mathrm{sim}(\boldsymbol{c}_j^{(i)} ,\hat{\boldsymbol{y}}^{(i)})$, for all  $j\in\{1, \dots, l\}$,
with $\text{sim}(\cdot, \cdot)$ usually the Tanimoto or cosine similarity.
This set quantifies a "matching score" of the predicted fingerprint against all the candidates in the set $\mathcal{C}^{(i)}$.
Let $s_{\boldsymbol{y}^{(i)}}$ denote the matching score to the true molecule $\boldsymbol{y}^{(i)}$.
Then, let us define the set of $k$ highest scores as $\mathcal{S}_k^{(i)}$, where $\mathcal{S}_k^{(i)} \subseteq \mathcal{S}^{(i)}$.
For a test instance indexed by $i$ the hit rate @ $k$ (HR$@k$) utility is then computed as:
\begin{equation*}
	U_\HRAtk \big(\boldsymbol{y}^{(i)}, \mathcal{S}_k^{(i)} \big) = \indicator\left[s_{\boldsymbol{y}^{(i)}} \in \mathcal{S}^{(i)}_k\right],
\end{equation*}
where $\indicator$ is the indicator function.

\subsection{Loss functions for fingerprint prediction} \label{sec:loss_fp}

The central idea of this study is to investigate how to define learning objectives to maximize performance w.r.t.~the metrics in the previous section.
Broadly speaking, we can categorize the loss functions previously employed into one of three categories, here called: (1) bitwise, (2) vectorwise, and (3) listwise.
All three categories are visualized in \cref{fig:msmolemain}B.
In the following, all loss functions are assumed to receive soft predictions $\hat{\boldsymbol{y}}^{(i)}\in[0,1]^m$.

\paragraph{Bitwise losses.}
The simplest way to approach the problem of fingerprint prediction is through the lens of multi-label classification.
For deep learning, multi-label loss functions are usually label-wise decomposable~\citep{dembczynski2012label}.
The archetypical loss in this category is the binary cross entropy (BCE), defined for a single sample $(i)$ as:
\begin{multline}
\label{eq:bceloss}
\text{BCE}\big(\boldsymbol{y}^{(i)}, \hat{\boldsymbol{y}}^{(i)} \big) = \\
-\sum_{k=1}^m \left[ y_k^{(i)} \log(\hat{y}_k^{(i)}) + (1-y_k^{(i)})\log(1-\hat{y}^{(i)}_k) \right].
\end{multline}
Hence, in this study, we take "bitwise losses" to mean all losses that seek to optimize the predictions of individual bits using label-wise decomposable functions. The focal loss (FL)~\citep{lin2017focal} is a modification of the BCE that down-weights easy examples so the model focuses more on learning from misclassified examples:
\begin{multline}
\label{eq:flloss}
\text{FL} \big(\boldsymbol{y}^{(i)}, \hat{\boldsymbol{y}}^{(i)} \big) = -\sum_{k=1}^m \Big[ y_k^{(i)} (1-\hat{y}_k^{(i)})^\gamma \log(\hat{y}_k^{(i)}) \\
+ (1-y_k^{(i)})(\hat{y}^{(i)}_k)^\gamma \log(1-\hat{y}^{(i)}_k) \Big],
\end{multline}
with $\gamma$ a hyperparameter.
As it can reduce the contribution of the majority class to the total loss, the focal loss arises as a natural drop-in replacement for BCE in the context of imbalanced label problems. Molecular fingerprint label vectors are sparse. On average, a molecule in the training set only counts 46 positive bits out of 4096 in total.

\paragraph{Vectorwise losses.}
As a second category, we consider loss functions that optimize the entire predicted label vector at once---or, in other words, are not label-wise decomposable.
Molecular fingerprint predictions are commonly optimized using the soft cosine similarity loss~\citep{goldman2023annotating, bushuiev2024massspecgym, dreams}:
\begin{equation}
\label{eq:cosine_loss}
	\text{CosineLoss} \big(\boldsymbol{y}^{(i)}, \hat{\boldsymbol{y}}^{(i)} \big) = 1 - \frac{\hat{\boldsymbol{y}}^{(i)\top} \boldsymbol{y}^{(i)}}{\|\hat{\boldsymbol{y}}^{(i)}\|_2 \cdot \|\boldsymbol{y}^{(i)}\|_2}.
\end{equation}
Alternatively, the Tanimoto similarity can be directly maximized~\citep{nowozin2014optimal, wang2023jaccard}. This is equivalent to minimizing the Tanimoto-loss or IoU loss\footnote{In this paper we will use the notion Tanimoto-similarity when maximizing a utility, and the IoU loss when minimizing a loss.}.

\paragraph{Listwise losses.}
A last category of loss functions attempts to learn how an input spectrum relates to every putative structure in the candidate list $\mathcal{C}^{(i)}$, here referred to as "listwise" losses.
A prevalent example in this category are the family of contrastive losses~\citep{chen2020simple}.
In metabolomics, they are usually formulated along the line of:
\begin{equation}
\label{eq:contrastiveloss}
	\text{Contr}\big(\boldsymbol{y}^{(i)},\mathcal{C}^{(i)} \big) = -\log \frac{e^{g(\boldsymbol{x}^{(i)}, \boldsymbol{y}^{(i)})/\tau}}{\sum_{\boldsymbol{c}^{(i)}_j \in \mathcal{C}^{(i)}} e^{g(\boldsymbol{x}^{(i)}, \boldsymbol{c}^{(i)}_j)/\tau}},
\end{equation}
where $\boldsymbol{y}^{(i)}$ is the true candidate, $\tau$ is a temperature hyperparameter, and $g$ is any function that returns a scalar score quantifying the match between spectrum (embedding) and candidate fingerprints~\citep{goldman2023annotating, kalia2025jestr, chen2024cmssp}.
Remark that that the above loss function essentially converts the multi-label fingerprint prediction problem into a multi-class classification problem, where the singular positive class is the correct fingerprint and all other candidate label bitvectors are taken as negative classes.
As it is computationally infeasible to enumerate a softmax over all possible label vectors $\boldsymbol{y}^{(i)}\in\{0,1\}^m$, the usage of the candidate set $\mathcal{C}^{(i)}$, with $|\mathcal{C}^{(i)}| \ll 2^m$, can be seen as some form of deterministic negative sampling~\citep{mikolov2013distributed}.

In this study we evaluate different formulations of $g$.
A first distinction can be made between those that learn a matching score using (the space of) fingerprints directly, or using a learned projection of the fingerprints.
For instance, $g$ can be formulated using the cosine similarity function, either on the predicted and candidate fingerprints directly, or on learned (lower-dimensional) projections thereof.
In the latter case, an extra linear layer $\boldsymbol{W}_\text{sp}\in\mathbb{R}^{h \times d}$ can be used to project the $d$-dimensional embedding space of the penultimate layer to a lower $h$-dimensional on which cosine similarities are computed (in this study, dimensions are chosen as $d=1024$, $h=256$).
To project candidate fingerprints to the same space, a separate learned linear projection $\boldsymbol{W}_\text{c}\in\mathbb{R}^{h \times m}$ can be used.
A second distinction can be made in terms of the function used to obtain the final matching score.
This function can be either fixed (e.g., the cosine similarity), or learned (e.g., using an MLP).
Together, these choices compose the four different options for $g$ we evaluated:
\begin{align}
    g_{\text{Fp-Cos}}\big(\boldsymbol{x}^{(i)}, \boldsymbol{c}_j^{(i)}\big) = & \cos\big(f(\boldsymbol{x}^{(i)}), \boldsymbol{c}_j^{(i)}\big)\,, \label{eq:fp-cos} \\
    g_{\text{Emb-Cos}}\big(\boldsymbol{x}^{(i)}, \boldsymbol{c}_j^{(i)}\big) = & \cos\big(\boldsymbol{W}_\text{sp}E(\boldsymbol{x}^{(i)}), \boldsymbol{W}_\text{c}\boldsymbol{c}_j^{(i)}\big)\,, \label{eq:emb-cos}\\
    g_{\text{Fp-MLP}}\big(\boldsymbol{x}^{(i)}, \boldsymbol{c}_j^{(i)}\big) =&\  \text{MLP}\big(f(\boldsymbol{x}^{(i)}), \boldsymbol{c}_j^{(i)}\big)\,, \label{eq:fp-mlp}\\
    g_{\text{Emb-MLP}}\big(\boldsymbol{x}^{(i)}, \boldsymbol{c}_j^{(i)}\big) =&\  \text{MLP}\big(\boldsymbol{W}_\text{sp}E(\boldsymbol{x}^{(i)}), \boldsymbol{W}_\text{c}\boldsymbol{c}_j^{(i)}\big) \,. \label{eq:emb-mlp}
\end{align}
In existing works, the formulation using cosine similarity on a lower-dimensional space ("Emb-Cos") is most popular~\citep{goldman2023annotating, kalia2025jestr}.
In the case that the contrastive loss operates on the penultimate layer of the fingerprint predictor ("Emb" models), the last layer is not utilized and, hence, removed from the model.
For more details on the exact configurations of the MLP above, as well as the neural network embedder $E(\boldsymbol{x}^{(i)})$, see \cref{sec:suppmethodsmsmole}.
In addition, for an extended comparison of our setup with prior work, see~\cref{app:related_work}.

\section{Empirical results} \label{sec:msmolemainres}


In this section, we empirically demonstrate a trade-off between retrieval performance and fingerprint accuracy.

\paragraph{Experimental setup.}
To train and test models, we use the recently published MassSpecGym benchmark~\citep{bushuiev2024massspecgym}.
MassSpecGym consists of 231~104 spectra--molecule pairs.
In total, 28~929 unique molecules are present in MassSpecGym.\footnotemark{}
MassSpecGym allocates all spectra corresponding to a molecule either to a training, validation, or test split.
The assignment of molecules to a split is decided upon by clustering them according to their Maximum Common Edge Subgraph (MCES) distance~\citep{kretschmer2025coverage}.
The splitting methodology ensures no molecules from the same cluster are found in different data splits.
Hence, MassSpecGym ships with a generalization-demanding data split.
The training, validation, and test set contain 194~119, 19~429, and 17~556 spectra, each covering 22~746, 3~185, and 2~998 molecules, respectively.

\footnotetext{
	On average, 8 different spectra are measured per molecule.
	The distribution of spectra per molecule is considerably left-tailed, however, as the median number of spectra per molecule equals 3.
	The maximum number of input spectra for a single molecule is 548.
}

For every molecule in MassSpecGym, the true molecule is represented by a 4096-dimensional Morgan fingerprint~\citep{rogers2010extended} (computed using a radius of 2).
Multiple ways exist to define retrieval candidate sets $\mathcal{C}^{(i)}$ from a database of chemical compounds.
MassSpecGym uses the two retrieval settings explained above: candidates drawn from the set of known molecules with (1) equal mass, or (2) equal chemical formula.
For both settings, separate retrieval metrics are computed and reported.
In the benchmark, retrieval candidates are drawn from a database of chemical compounds up to a max of $|\mathcal{C}^{(i)}| \leq 256$, for all $i\in\{1,\dots,l\}$.

As explained in \cref{sec:eval_fp}, fingerprint predictions are produced according to $\hat{\boldsymbol{y}}^{(i)} = f(\boldsymbol{x}^{(i)}) = \sigma(\boldsymbol{W}_\text{out} E(\boldsymbol{x}^{(i)}))\in[0,1]^{4096}$.
In this study, the spectrum embedder $E$ is an MLP operating on binned input spectra $\boldsymbol{x}^{(i)}$ bounded by \textit{m/z} values in the interval $[0, 1005]$, with a binning width of $0.1$.
Hence, the dimensionality of input spectra $\boldsymbol{x}^{(i)}\in\mathbb{R}^f$ is $f=10050$.
Architectural details are given in \cref{sec:suppmethodsmsmole}. To optimize the MLPs, we use one of the eight loss functions explained in \cref{sec:loss_fp}: (1) BCE, (2) FL, (3) Cosine Loss, (4) IoU Loss, (5) Contrastive FP-Cos, (6) Contrastive Emb-Cos, (7) Contrastive Fp-MLP, and (8) Contrastive Emb-MLP.
For every loss function, the learning rate is tuned separately.
Test results are then shown using the models with the optimal learning rate, validation-wise.
For the focal and all contrastive losses, the additional hyperparameters of $\gamma$ and $\tau$, respectively, are additionally tuned.
Model training and tuning details are given in \cref{sec:suppmethodsmsmole}.

Model performance is evaluated using (1) the average Tanimoto similarity of the prediction to the fingerprint of the true molecule, and (2) the average hit rate @ $k$ using different retrieval candidate settings (see \cref{sec:eval_fp}).
For computing validation and test Tanimoto similarities, all predictions are binarized using $\hat{\boldsymbol{y}}^{(i)}>0.5$.
For computing retrieval scores, continuous $[0,1]$-bounded predictions are used.

\paragraph{Experimental results.} \cref{tab:msmole_maintabv2} shows test performances of all eight tested loss functions in terms of molecular retrieval and average Tanimoto similarity.
Validation performances in function of training time for a selection of loss functions are shown in \cref{sec:extra_results} (\cref{fig:val_over_time}).
Best retrieval scores are obtained with contrastive losses.
Among those, we empirically find the versions using the cosine similarity score (either on the fingerprint, or on the embedding level) to outperform the learned MLP similarity variants.
The best average Tanimoto similarities are obtained using the IoU Loss.
These results are somewhat expected, as these two types of losses can be seen as specifically optimizing for these respective metrics.
However, we empirically find that the two objectives of retrieval and fingerprint accuracy (in terms of Tanimoto similarity) seem to oppose one another, as the best scoring models in each category are conversely the worst scoring in the other.
This result is somewhat surprising: to predict more accurate fingerprints is to worsen retrieval results.
This fundamental trade-off presents itself as a pareto front, also visualized in \cref{fig:inverse_corr} in the introduction, where results from \cref{tab:msmole_maintabv2} are shown as a scatterplot.

\begin{table*}[t!]
    \centering
    \setlength{\tabcolsep}{4pt}
    \caption{
        Molecular retrieval and fingerprint accuracy performance on MassSpecGym~\citep{bushuiev2024massspecgym}.
        The best and second-best performing models for each metric are indicated with boldface and underlined, respectively.
        For the models trained in this study, numbers report averages and standard deviations over 5 model runs.
        The performances of a random model and the MIST model~\citep{goldman2023annotating} are taken from \citet{bushuiev2024massspecgym}.
    }
    \resizebox{1.0\linewidth}{!}{
        \begin{tabular}{lccccccc}
            \toprule
            & \multicolumn{3}{c}{Retrieval w. equal mass candidates}                                      & \multicolumn{3}{c}{Retrieval w. equal formula candidates}                                   &                 \\ \cmidrule(lr){2-4} \cmidrule(lr){5-7}
            & \multicolumn{1}{c}{HR@1} & \multicolumn{1}{c}{HR@5} & \multicolumn{1}{c}{HR@20} & \multicolumn{1}{c}{HR@1} & \multicolumn{1}{c}{HR@5} & \multicolumn{1}{c}{HR@20} & \multicolumn{1}{c}{Tanimoto (IoU)}        \\ \midrule \multicolumn{8}{l}{This study} \\ \midrule
            BCE        & $10.12 \pm 0.36$          & $19.44 \pm 0.16$          & $34.96 \pm 0.73$           & $10.08 \pm 0.22$           & $22.60 \pm 0.35$          & $40.79 \pm 0.41$           & $16.75 \pm 0.11$ \\
            FL        & $11.54 \pm 0.14$          & $20.84 \pm 0.27$          & $36.72 \pm 0.44$           & $11.15 \pm 0.14$          & $23.82 \pm 0.23$          & $42.76 \pm 0.53$           & $16.58 \pm 0.16$ \\
            Cosine Loss  & $09.48 \pm 0.19$          & $18.19 \pm 0.17$          & $32.90 \pm 0.38$           & $09.12 \pm 0.41$          & $21.33 \pm 0.30$          & $39.38 \pm 0.31$           & $\underline{17.22 \pm 0.09}$ \\
            IoU Loss    & $05.58 \pm 0.32$         & $13.01 \pm 0.39$          & $26.51 \pm 0.50$           & $06.48 \pm 0.32$          & $17.62 \pm 0.51$          & $35.05 \pm 0.22$           & $\mathbf{19.42 \pm 0.13}$ \\
            Contrastive Fp-Cos   & $\underline{12.30 \pm 0.67}$	& $\underline{26.38 \pm 0.95}$	& $42.65 \pm 2.18$     & $11.37 \pm 0.31$ &	$\underline{25.28 \pm 0.36}$ &	$\underline{45.04 \pm 0.76}$  & $07.86 \pm 0.99$$^1$ \\
            Contrastive Emb-Cos & $12.29 \pm 0.69$ &	$26.09 \pm 1.43$ &	$\underline{45.05 \pm 0.71}$     & $\mathbf{13.62 \pm 1.12}$	& $\mathbf{26.33 \pm 1.06}$ &	$\mathbf{46.50 \pm 0.94}$           & \multicolumn{1}{c}{N/A} \\
            Contrastive Fp-MLP & $09.72 \pm 0.20$ &	$22.99 \pm 1.80$ &	$42.34 \pm 1.08$      & $12.28 \pm 0.79$ & $24.34 \pm 0.37$ & $43.49 \pm 0.80$         & \multicolumn{1}{c}{N/A} \\
            Contrastive Emb-MLP & $11.08 \pm 0.40$ &	$25.66 \pm 0.82$ &	$43.91 \pm 2.23$      & $\underline{12.46 \pm 0.63}$ &	 $25.15 \pm 0.89$ &	$44.72 \pm 1.39$        & \multicolumn{1}{c}{N/A} \\\midrule \multicolumn{8}{l}{MassSpecGym \citep{bushuiev2024massspecgym} results} \\ \midrule
            Random         & $00.37$          & $02.01$          & $08.22$           & $03.06$           & $11.35$          & $27.74$           &  \multicolumn{1}{c}{-} \\
            MIST$^2$           & $\boldsymbol{14.64}$ $^2$          & $\boldsymbol{34.87}$ $^2$         & $\boldsymbol{59.15}$ $^2$          & $09.57$           & $22.11$          & $41.12$           &  \multicolumn{1}{c}{-} \\ \bottomrule
        \end{tabular}
    } \begin{minipage}{1.0\linewidth}
    	\scriptsize
    	\raggedright
    	$^1$: Obtained using a different temperature value ($\tau=4$) than used for retrieval scores ($\tau=1/256$), as optimal values for the two objectives drastically differ. For more details, see Appendix~\cref{tab:tau_tune}. \\
        $^2$: MIST scores were obtained by running the model with ground truth molecular formulas. In the mass-based retrieval setting, this can be interpreted as either a form of data leakage, or as an "oracle" setting in which the formula is known. For more information, see \href{https://github.com/pluskal-lab/MassSpecGym/issues/52}{MassSpecGym GitHub Issue \#52}.
    	\end{minipage}\label{tab:msmole_maintabv2}
\end{table*}

The above results were obtained after carefully tuning the considered methods (see \cref{sec:extra_results} (\cref{tab:gamma_tune,tab:tau_tune}) for tuning results of focal parameter $\gamma$ and contrastive temperature $\tau$).
All results used the optimal hyperparameter values chosen in terms of HR@20.
An exception was made for the contrastive Fp-Cos model, as an inverse correlation was found between retrieval scores and Tanimoto similarities in function of the temperature hyperparameter. For this reason, two versions of models using this loss were trained separately using optimal temperatures for both metrics. 

We also tested the impact of the similarity function used during retrieval (i.e., during evaluation/inference after a model is trained).
In practice, retrieval results are better when using cosine similarity to score candidates against a prediction, regardless of the loss function used during training (see \cref{sec:extra_results} -- \cref{tab:retrieval_func}). Hence, the results shown in \cref{tab:msmole_maintabv2} compute retrieval scores at inference time using the cosine similarity score between predicted fingerprint and candidates. An exception to this rule is the contrastive loss function, where retrieval scores are inferred using the similarity function part of the neural network (i.e., $g$).
In addition, we present experimental results using (1) different choices for negative candidate sets in \cref{sec:supp_hardnegs}, and (2) using weighted combinations of contrastive and IoU loss terms in \cref{sec:extra_results} (\cref{fig:inverse_corr2}).

\section{Theoretical analysis of loss functions}

The previous empirical results suggest an apparent paradox: "better" retrieval scores can only be obtained at the expense of "worse" similarity scores.
To explain how this phenomenon arises, a theoretical investigation of the used loss functions is warranted.
Adopting a decision-theoretic perspective~\citep{devroye2013probabilistic, bartlett2006convexity}, we first discuss Bayes-optimal inference for the loss functions discussed previously. In the 
decision‑theoretic (plug‑in) approach, one first fits a probabilistic model (often via cross-entropy) and then decodes by maximizing the target metric’s expected value. In multi-class and multi‑label settings this Bayes‑risk view cleanly explains what information about the underlying distribution is needed to be optimal under different metrics, and permits regret analyses when a different metric is optimized instead. We conduct such a regret analysis in the second part of this section. 

A different approach is to minimize a non‑differentiable task loss during training via surrogates, such as structured SVMs/hinges or other differentiable relaxations \citep{Hazan2010}. The methods based on vectorwise losses in \cref{sec:loss_fp} belong to this category. A long line of work studies when such surrogates are consistent for the target metric and how their regret transfers to metric regret. Since our focus in this paper is rather on comparing models that optimize different losses, such an analysis is out of scope for this paper. Theoretical and empirical studies (e.g.\ \citep{Ye2012, dembczynski13}) show that decision-theoretic and direct utility maximization approaches can be asymptotically equivalent, but differ in finite data regimes and under model misspecification.

\subsection{Bayes-risk analysis} \label{sec:contrastive_as_retrieval}


To perform Bayes-risk analyses, we assume that training and test data is i.i.d.\ sampled from an unknown probability distribution $P$ over $\mathcal{X} \times \mathcal{Y}$.
The associated ground-truth conditional distribution will be denoted by $p (\boldsymbol{y} \, | \, \boldsymbol{x})$. Specifically for metabolomics data, one can assume that $p (\boldsymbol{y} \, | \, \boldsymbol{x}^{(i)}) = 0$ for all $\boldsymbol{y}$ that are not in the candidate set $\mathcal{C}^{(i)}$. For a spectrum $\boldsymbol{x}$ and performance metric $U: \mathcal{Y} \times \mathcal{Y} \rightarrow \mathbb{R}$, the prediction $\boldsymbol{y}^\star$ that maximizes the expected value of the metric (i.e., the Bayes-optimal decision) is given by:
\begin{equation}
\label{eq:bayes_opt}
    \boldsymbol{y}^\star_U(\boldsymbol{x}) =  \argmax_{\hat{\boldsymbol{y}} \in \{0,1\}^m} \, \sum_{\boldsymbol{y}\in\mathcal{C}} \, U(\hat{\boldsymbol{y}}, \boldsymbol{y}) \, p (\boldsymbol{y} \,|\, \boldsymbol{x}) \,.
\end{equation}

In the following, we (1) define the Bayes-optimal decision for each utility metric of interest, and
(2) discuss how each loss function can be regarded as a smooth surrogate for the corresponding metric. 

\paragraph{Bitwise loss functions.} 
Define bitwise accuracy as
$\frac{1}{m} \sum_{i=1}^m [y_i = \hat{y}_i]$ with $[\cdot]$ the indicator function.
\citet{dembczynski2012label} have shown that the Bayes-optimal decision for bitwise accuracy is:
\begin{align*}
    y_i^\star(\boldsymbol{x}) &= \argmax_{b \in \{0,1\}} \mathbb{P}(Y_i=b\mid \boldsymbol{x})\\ &=  \argmax_{b \in \{0,1\}} \sum_{\boldsymbol{y} \in \mathcal{Y}: y_i = b} p(\boldsymbol{y} \,|\, \boldsymbol{x})\,,
\end{align*}
$\mathbb{P}(Y_i=b\mid \boldsymbol{x})$ is the marginal probability of the $i$-th bit being $b$, so binary cross entropy \cref{eq:bceloss} or focal loss \cref{eq:flloss} can therefore be interpreted as surrogates for bit-wise accuracy \citep{bartlett2006convexity}. 

\paragraph{Vectorwise loss function.}
As performance metrics in multi-label classification tasks, the cosine similarity, Tanimoto similarity and F1-measure are very related\footnote{In multi-label classification, those three metrics can be computed per instance (instance-wise averaging), per label (macro-averaging), or by pooling all predictions together for a dataset (micro-averaging). Bayes-optimal decisions might differ for the three versions, but only the intance-wise averaged version is relevant for this paper.}. All three lead to fractions with the same numerator, but a different denominator. For F1-measure it has been shown in the past that the Bayes-optimal prediction can be obtained by analyzing $m^2 +1$ parameters of the joint distribution, but no closed-form expression exists \citep{waegeman2014}. For Tanimoto similarity, no efficient way of finding the Bayes-optimal prediction exists, and the inference problem is believed to be NP-hard \citep{chierchetti2010}. A widely adopted training‑time remedy is to replace cross‑entropy with a differentiable sigmoid-smoothed version of Tanimoto (as done in \cref{sec:loss_fp} and our experiments), or to consider a  convex Lovász extension of the IoU loss, yielding the Lov\'asz‑Softmax surrogate \citep{Berman2017TheLL}.

The cosine similarity loss has only become popular in recent years, due to advances in self-supervised learning~\citep{chen2020simple}. While the empirical success is clear, the decision‑theoretic literature on Bayes‑optimal prediction under cosine utility is nascent compared to F1-measure and Tanimoto. As shown in a first concrete theoretical result (with formal proof in \cref{app:Bayes-cosine}), it turns out that computing the Bayes-optimal decision is simpler than for F1-measure and Tanimoto. 
\begin{restatable}{theorem}{thmcosinebayes}
\label{thm:cosine_bayes}
For $i\in\{1,\dots,m\}$ define 
$w_i \;=\; \sum_{\boldsymbol{y}\in\{0,1\}^m}\frac{y_i\,p(\boldsymbol{y}\,|\,\boldsymbol{x})}{\|\boldsymbol{y}\|}$ and let $w_{(1)}\ge\cdots\ge w_{(m)}$ be the sorted weights. For any fixed size $s$, the
best $s$-sparse Bayes predictor for cosine similarity selects the $s$ indices with largest $w_i$'s
(ties arbitrary). Overall, the optimal support size is 
$s^\star \in \argmax_{s\in\{1,\dots,m\}} \frac{1}{\sqrt{s}}\sum_{t=1}^{s} w_{(t)}\,$.
\end{restatable}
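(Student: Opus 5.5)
The plan is to write the expected cosine utility of a binary prediction explicitly and see that it becomes a linear objective in $\hat{\boldsymbol{y}}$, scaled by a factor that depends only on the support size. Fix $\boldsymbol{x}$ and take $\hat{\boldsymbol{y}}\in\{0,1\}^m$ with support $S=\{i:\hat y_i=1\}$ and $|S|=s\ge 1$. The all-zero vector is excluded, or handled by the convention that its cosine is $0$. For binary vectors, $\|\hat{\boldsymbol{y}}\|_2=\sqrt{s}$ and $\hat{\boldsymbol{y}}^\top\boldsymbol{y}=\sum_{i\in S}y_i$. Here $\|\boldsymbol{y}\|$ is read as $\|\boldsymbol{y}\|_2$, and we use the convention that $\boldsymbol{y}=\boldsymbol{0}$ has zero mass or contributes $0$. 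The expected utility in \cref{eq:bayes_opt} is then
\[
\sum_{\boldsymbol{y}} \frac{\sum_{i\in S} y_i}{\sqrt{s}\,\|\boldsymbol{y}\|}\,p(\boldsymbol{y}\,|\,\boldsymbol{x})
=\frac{1}{\sqrt{s}}\sum_{i\in S}\sum_{\boldsymbol{y}}\frac{y_i\,p(\boldsymbol{y}\,|\,\boldsymbol{x})}{\|\boldsymbol{y}\|}
=\frac{1}{\sqrt{s}}\sum_{i\in S} w_i .
\]
The middle step only exchanges two finite sums. The key point is that the prediction's norm depends on $S$ only through $s$, so it factors out of the expectation. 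This factorization is what makes cosine easier than Tanimoto or F1, where the denominator couples $\hat{\boldsymbol{y}}$ and $\boldsymbol{y}$.

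With $s$ fixed, the factor $1/\sqrt{s}$ is a positive constant. Maximizing the expected utility is therefore the same as maximizing $\sum_{i\in S}w_i$ over all subsets $S$ of size $s$. An exchange argument shows the optimum is any $s$ indices with the largest weights. If $S$ contains some $j$ and omits some $k$ with $w_k>w_j$, swapping $j$ for $k$ strictly increases the sum. Hence the maximum equals $\sum_{t=1}^s w_{(t)}$, and ties among equal weights may be broken arbitrarily without changing the value. This proves the first claim.

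For the overall optimum, partition $\{0,1\}^m\setminus\{\boldsymbol{0}\}$ according to support size. The Bayes-optimal value is the maximum over $s\in\{1,\dots,m\}$ of the best value at size $s$, which is $\frac{1}{\sqrt{s}}\sum_{t=1}^s w_{(t)}$. Any maximizing $s^\star$, together with its top-$s^\star$ support, gives a Bayes-optimal prediction, which is the stated formula.

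The only delicate points are conventions, namely the zero vectors and the norm in $w_i$. I would state these explicitly at the start. No step presents a genuine obstacle; the argument rests entirely on the factorization in the first step.
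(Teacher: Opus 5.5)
Your proposal is correct and follows the paper's proof essentially step for step. Both use the same factorization of the expected cosine utility into $\frac{1}{\sqrt{s}}\sum_{i\in S} w_i$ by exchanging finite sums, followed by the same swap argument. The only difference is organizational: you run the exchange at each fixed support size and then maximize over $s$, whereas the paper applies it directly to the support of a global maximizer, so your version states the fixed-$s$ claim and the formula for $s^\star$ slightly more explicitly.
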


\paragraph{Listwise loss functions.}
For HR@1 the Bayes-optimal decision is the mode of the joint distribution \citep{dembczynski2012label}:
$\boldsymbol{y}^\star_{\HRAtOne}(\boldsymbol{x}) = \argmax_{\boldsymbol{y} \in \mathcal{C}} p(\boldsymbol{y} \,|\, \boldsymbol{x}) \,.$
This is also known as the MAP estimate. In \cref{sec:contrastiveHR} we explain how the contrastive loss in \cref{eq:contrastiveloss} is a computationally efficient alternative to categorical cross-entropy, as a surrogate for HR@1 maximization. Furthermore, we also discuss Bayes-optimal decisions for HR@k.

These theoretical results already provide justification for why the different methods analyzed in the experiments cannot be optimal for all metrics at the same time.
To illustrate that Bayes-optimal decisions can be different, let us consider a simple example with fingerprint bitvectors of length 5 and $l=4$ non-zero probability candidates ($\mathcal{C}=\{\boldsymbol{y}_1, \dots \boldsymbol{y}_4\}$), see table below.
\begin{wraptable}{r}{4cm}
\centering
\fbox{
    \begin{tabular}{cc}
		$\boldsymbol{y}_i$     &  $p(\boldsymbol{y}\, | \, \boldsymbol{x})$ \\[0.4mm]
		\midrule
		$\boldsymbol{y}_1 = 11000$ & 0.05 \\
		$\boldsymbol{y}_2 = 10100$ & 0.25 \\
		$\boldsymbol{y}_3 = 10010$ & 0.30 \\
		$\boldsymbol{y}_4 = 11001$ & 0.40 \\
	\end{tabular}}
	\label{tab:tanimexample}
\end{wraptable}
\\For this toy example, it is easy to exhaustively enumerate all possible combinations and obtain the Bayes-optimal solution by brute force.
Using the values provided, we find that $\boldsymbol{y}^\star_{\text{HR1}}=11001$, $\boldsymbol{y}^\star_{\text{BIT}}=10000$,
$\boldsymbol{y}^\star_{\text{COS}}=11111$, and $\boldsymbol{y}^\star_{\text{IOU}}=11001$.
This illustrates that, even in such simple cases, the optimal solutions under different metrics can diverge substantially. Our example connects to the notion of "regret", i.e.\ the difference between the expected loss of the chosen solution and the minimal achievable Bayes loss~\citep{berger2013statistical}.
In the next section, we generalize this example and derive regret bounds.

\subsection{Regret analysis} \label{sec:regret_analysis}
In this section, we conduct a regret analysis of Bayes-optimal decision rules under different loss functions discussed in \cref{sec:contrastive_as_retrieval} and present novel bounds. 
First, we define the regret of decoding with rule $V$ but evaluating under metric $U$ as
\[
\mathcal{R}_{U}^{\mathrm{vs}\,V}
:= 
\mathbb{E}[U(\bY,\by_U^\star(\bx)) \mid \bx] -
\mathbb{E}[U(\bY,\by_V^\star(\bx)) \mid \bx] \,,
\]
and its worst-case regret over any probability distribution is denoted as $\sup_{p(\cdot\mid \boldsymbol{x})} \mathcal{R}_{U}^{\mathrm{vs}\,V}$.

All the introduced bounds are conditioned on a fixed spectrum $\bx$ and candidate set $\mathcal{C}=\{\by_1,\ldots,\by_\ell\}$ with conditional probabilities
$p_j := p(\by_j \mid \bx)$ and $\sum_{j=1}^{\ell} p_j = 1$. Let $p_\star := \max_{j} p_j$ denote the highest conditional probability.
For a similarity metric $U_\Sim(\by_i, \by_j) \in [0, 1]$ such that $U_\Sim(\by_i, \by_i) = 1$ 
, we also define the similarity band:
\[
    \sigma_{\min} := \min_{i\neq j} U_\Sim(\by_i, \by_j), \quad \sigma_{\max} := \max_{i\neq j} U_\Sim(\by_i, \by_j) \,,
\]
A first novel regret bound concerns the regret regarding any vectorwise similarity metric (e.g. Tanimoto, cosine, F1-measure, etc.) when maximizing HR@1 instead. 
\begin{restatable}[Regret of HR@1 under vectorwise similarity]{theorem}{thmregrethrtovecsim}
\label{thm:regret_hr1_to_vecsim}
When the similarity metric is symmetric, so
$U_\Sim(\by_i, \by_j) = U_\Sim(\by_j, \by_i)$, the regret of HR@1 decoding evaluated under the similarity metric $U_\Sim$ obeys:
\begin{multline*}
\mathcal{R}_{\Sim}^{\mathrm{vs}\,\HRAtOne} \le \big[ (\sigma_{\max} - \sigma_{\min})(1-p_\star - p_\Sim) 
\IfRestatedTF{}{\\}
- (1-\sigma_{\max})(p_\star - p_\Sim) \big]_+ = \sup_{p(\cdot | \bx)} \mathcal{R}_{\Sim}^{\mathrm{vs}\,\HRAtOne}\,,
\end{multline*}
where $p_\Sim = p(\by^\star_\Sim(\bx) \mid \bx)$ and $[z]_+ := \max\{0,z\}$.
\end{restatable}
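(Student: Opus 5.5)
The plan is to write both expected utilities explicitly and compare them term by term. Fix $\bx$ and the candidate set $\mathcal{C}$. Write $a$ for the index chosen by HR@1 decoding, i.e.\ the mode, so that $p_a=p_\star$. Write $b$ for the index of $\by^\star_\Sim(\bx)$, so that $p_b=p_\Sim$. I treat $\by^\star_\Sim$ as a candidate; that is how $p_\Sim$ is defined in the statement. The expected similarity of predicting candidate $\by_c$ is
\[
\E[U_\Sim(\bY,\by_c)\mid\bx]=p_c+\sum_{j\neq c}p_j\,U_\Sim(\by_j,\by_c),
\]
using $U_\Sim(\by_c,\by_c)=1$.

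If $a=b$, the regret is $0$ and the bound holds trivially. Otherwise I subtract the two expressions. The terms in $p_a$ and $p_b$ pair up, and by symmetry $U_\Sim(\by_a,\by_b)=U_\Sim(\by_b,\by_a)$. This gives the exact decomposition
\[
\mathcal{R}_{\Sim}^{\mathrm{vs}\,\HRAtOne}=(p_b-p_a)\bigl(1-U_\Sim(\by_a,\by_b)\bigr)+\sum_{j\neq a,b}p_j\bigl(U_\Sim(\by_j,\by_b)-U_\Sim(\by_j,\by_a)\bigr).
\]

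Next I bound each term using only the similarity band.

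\emph{First term.} We have $p_b-p_a=p_\Sim-p_\star\le 0$, and $1-U_\Sim(\by_a,\by_b)\ge 1-\sigma_{\max}$ because $a\ne b$. Hence the first term is at most $-(1-\sigma_{\max})(p_\star-p_\Sim)$.

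\emph{Second term.} Each off-diagonal similarity lies in $[\sigma_{\min},\sigma_{\max}]$, so each difference in the sum is at most $\sigma_{\max}-\sigma_{\min}$. The remaining mass is $\sum_{j\ne a,b}p_j=1-p_\star-p_\Sim$. So the second term is at most $(\sigma_{\max}-\sigma_{\min})(1-p_\star-p_\Sim)$.

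Finally, the regret is nonnegative because $\by^\star_\Sim$ is optimal for $U_\Sim$. Combining these gives the $[\cdot]_+$ bound.

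The remaining step, and the one I expect to be the main obstacle, is the claimed equality with $\sup_{p(\cdot\mid\bx)}$. It requires showing that the bound is attained. Both inequalities above are tight exactly when three conditions hold: $U_\Sim(\by_a,\by_b)=\sigma_{\max}$; every remaining candidate carrying mass has similarity $\sigma_{\max}$ to $\by_b$ and $\sigma_{\min}$ to $\by_a$; and $\by_b$ is genuinely the $U_\Sim$-maximiser.

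I would build such a configuration directly. Put mass $p_\star$ on $\by_a$, mass $p_\Sim$ on $\by_b$, and the rest on candidates that are "close to $b$, far from $a$". Then I would check that this $b$ indeed beats $a$ and every other candidate for $U_\Sim$ whenever the bracket is positive. The cases where the bracket is $\le 0$ are handled by choosing $p$ so that the mode coincides with the similarity optimum, which gives regret $0$.

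The delicate points are, first, whether the extremes $\sigma_{\min}$ and $\sigma_{\max}$ can be realised simultaneously by the same triples, and second, ensuring no third candidate overtakes $b$. If the fixed candidate set cannot realise this, I would interpret the supremum as ranging over distributions together with admissible similarity structures having the given band. I would then exhibit the extremal structure explicitly, for instance by assigning the remaining mass to a single candidate $\by_c$ with $U_\Sim(\by_c,\by_b)=\sigma_{\max}$ and $U_\Sim(\by_c,\by_a)=\sigma_{\min}$.
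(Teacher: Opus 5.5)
Your proposal is correct and follows essentially the same route as the paper. You use the same exact decomposition of $u_{i_\Sim}-u_{i_\star}$, with the cross term $(\sigma-1)(p_\star-p_\Sim)$ controlled via $\sigma\le\sigma_{\max}$ and the leftover mass controlled by the band width, and the same extremal construction for tightness (all other candidates at similarity $\sigma_{\max}$ to $\by^\star_\Sim$ and $\sigma_{\min}$ to the mode).

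One caveat: your single-candidate variant only works when $1-p_\star-p_\Sim\le p_\star$, since otherwise that candidate becomes the mode. In general you should spread the leftover mass thinly over many candidates, as the paper does, which also ensures no third candidate overtakes $\by^\star_\Sim$.
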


Without making additional assumptions about the candidate set (given only $\sigma_{\min}\,,\sigma_{\max}\,, p_\star$ and $p_\Sim$), the above upper bound is tight. 
We present the proof of the theorem and its tightness in \cref{app:regret_hr_to_sim}.
The result shows that regret increases with conditional uncertainty $(1-p_\star)$ and
with the width of the candidate-set similarity band $(\sigma_{\max}-\sigma_{\min})$.
A larger margin between $p_\star$ and $p_\Sim$ reduces the regret, but its influence is
attenuated when $\sigma_{\max}$ is close to $1$ (near duplicates in the candidate set),
because the subtractive term involves $(1-\sigma_{\max})$. 
Empirically, in the equal-mass candidate sets of MassSpecGym, we often observe 
a large range of bands and sometimes $\sigma_{\max} \approx 1$,
as shown in \cref{sec:extra_results} (\cref{fig:sigmas,fig:cosine_sigmas}). This makes the HR@1 choice
particularly vulnerable under Tanimoto/cosine similarity, which is consistent with the Pareto trade-off in \cref{fig:inverse_corr} and \cref{tab:msmole_maintabv2}. 

A second novel regret bound concerns the regret regarding HR@1 when maximizing any similarity metric (or minimizing any vectorwise loss) instead. 
\begin{restatable}[Regret of vectorwise similarity under HR@1]{theorem}{thmregretvecsimtohr}
\label{thm:regret_vecsim_to_hr1}
\[
    \mathcal{R}_{\HRAtOne}^{\mathrm{vs}\,\Sim}
    \le \min \left\{p_\star, \frac{\sigma_{\max}-\sigma_{\min}}{1-\sigma_{\max}} (1-p_\star)\right\} \,.
\]
\end{restatable}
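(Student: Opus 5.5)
The plan is to reduce both terms of the minimum to elementary comparisons of expected utilities. We treat decoding as ranging over the candidate set $\mathcal{C}$, so that $\by^\star_\Sim(\bx)\in\mathcal{C}$. This is the setting of the similarity band, which is only defined on $\mathcal{C}$. We also write $p_\Sim := p(\by^\star_\Sim(\bx)\mid\bx)$ as in \cref{thm:regret_hr1_to_vecsim}.

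\textbf{Step 1: rewrite the regret.} Under HR@1, the expected utility of predicting a candidate $\by$ is exactly $p(\by\mid\bx)$. The HR@1-optimal decision is the mode, with value $p_\star$. Hence
\[
\mathcal{R}_{\HRAtOne}^{\mathrm{vs}\,\Sim} = p_\star - p_\Sim .
\]
The first term of the minimum follows at once, since $p_\Sim\ge 0$.

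\textbf{Step 2: use Bayes-optimality of $\by^\star_\Sim$.} Let $\by_{\star}$ denote the mode. Because $\by^\star_\Sim$ maximizes expected similarity, we have
\[
\sum_j p_j U_\Sim(\by^\star_\Sim,\by_j) \ \ge\ \sum_j p_j U_\Sim(\by_{\star},\by_j).
\]
We bound each side using the similarity band and the normalization $U_\Sim(\by,\by)=1$. On the left, the diagonal term contributes $p_\Sim$ and every off-diagonal term is at most $\sigma_{\max}$, so the left side is at most $p_\Sim+\sigma_{\max}(1-p_\Sim)$. On the right, the diagonal term contributes $p_\star$ and every other term is at least $\sigma_{\min}$, so the right side is at least $p_\star+\sigma_{\min}(1-p_\star)$. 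The arguments are always ordered as (prediction, candidate). Consequently no symmetry assumption is needed here, provided the band is taken over ordered pairs $i\neq j$, which is how it is defined.

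\textbf{Step 3: rearrange.} Combining the two bounds gives
\[
p_\Sim(1-\sigma_{\max})+\sigma_{\max}\ \ge\ p_\star(1-\sigma_{\min})+\sigma_{\min}.
\]
We write $-\sigma_{\max}p_\Sim = -\sigma_{\max}p_\star+\sigma_{\max}(p_\star-p_\Sim)$ and collect terms, which yields
\[
(p_\star-p_\Sim)(1-\sigma_{\max}) \le (\sigma_{\max}-\sigma_{\min})(1-p_\star).
\]
When $\sigma_{\max}<1$ we divide by $1-\sigma_{\max}$ to obtain the second term of the minimum. When $\sigma_{\max}=1$ that term is $+\infty$ (or undefined), and the bound reduces to the trivial one from Step 1.

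\textbf{Main obstacle.} The only delicate point is the bookkeeping in Step 2. We must isolate the diagonal contributions, which use $U_\Sim(\by,\by)=1$, from the off-diagonal ones, which use the band. We also have to treat the degenerate case $\by^\star_\Sim=\by_\star$, where the regret is $0$ and the bound holds trivially. The algebra in Step 3 is routine.
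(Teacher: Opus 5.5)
Your proposal is correct and follows the paper's proof essentially step for step. You reduce the regret to $p_\star - p_\Sim$, and then obtain the second term by combining $u_{i_\Sim}\ge u_{i_\star}$ with the row-wise upper bound on the $\Sim$-optimal candidate and the row-wise lower bound on the mode (\cref{lem:row-bounds}), dividing by $1-\sigma_{\max}$ and handling $\sigma_{\max}=1$ separately, exactly as the paper does.
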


We present the proof of the theorem in \cref{app:regret_sim_to_hr}. 
Again, wide similarity bands and near duplicates ($\sigma_{\max}\to 1$) inflate this regret. 
In data regimes with high uncertainty (small $p_\star$),
the second term becomes dominant and can be large, explaining why maximizing Tanimoto/cosine similarity
may substantially hurt HR@1---as observed for IoU‑trained models in \cref{tab:msmole_maintabv2}. 
Based on this results we can also derive a sufficient condition for no regret (agreement of decisions) between two rules -- see \cref{app:hr_sim_aggrement} for details.

The last case we consider in regret analysis is the gap between decoding for bitwise accuracy evaluated under HR@1 and vectorwise loss functions.
While the bitwise accuracy can also be viewed as a vectorwise loss, and thus its relation to HR$@1$ regret is covered by \cref{thm:regret_vecsim_to_hr1}, an alternative general worst case result is known~\citep{dembczynski2012label}:
$\sup_{p(\cdot | \bx)} \mathcal{R}_\HRAtOne^{\mathrm{vs\,BIT}}=\tfrac{1}{2}$.
Following this line of constructing regret bounds, we also show that decoding by bitwise accuracy (optimized by BCE/focal) can be arbitrarily misaligned with other vectorwise losses of interest:
\begin{restatable}[Regret bitwise-loss under vectorwise similarity]{theorem}{thmbitwisevectorwise}
\[
    \sup_{p(\cdot | \bx)} \mathcal{R}_\Tan^{\mathrm{vs\,\Bit}}\ge\tfrac{1}{2},\qquad
    \sup_{p(\cdot | \bx)} \mathcal{R}_\Cos^{\mathrm{vs\,\Bit}}\ge\tfrac{1}{2}\quad (m\ge 3) \,.
\]
\end{restatable}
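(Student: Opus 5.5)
The plan is to exhibit, for every $\varepsilon>0$, one conditional distribution in which bitwise decoding outputs the all-zero vector while a single-bit prediction achieves expected similarity close to $\tfrac12$. Recall from the Bayes-risk analysis above that the bitwise-accuracy decoder $\by^\star_{\Bit}$ sets bit $i$ to $1$ iff the marginal $\mathbb{P}(Y_i=1\mid\bx)$ exceeds $\tfrac12$. So I want a distribution in which every marginal is strictly below $\tfrac12$. Then $\by^\star_{\Bit}=\boldsymbol{0}$, and its expected Tanimoto and cosine similarity to any nonzero $\bY$ is $0$. For Tanimoto this is immediate, since the numerator is $0$ and the denominator is $\|\by\|_1>0$. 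For cosine I adopt the standard convention that the similarity with the zero vector is $0$.

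\textbf{Construction.} Take $m\ge 3$ and candidates given by the standard basis vectors $\boldsymbol{e}_1,\boldsymbol{e}_2,\boldsymbol{e}_3$ (padded with zeros if $m>3$), with
\[
p(\boldsymbol{e}_1\mid\bx)=\tfrac12-\varepsilon,\qquad p(\boldsymbol{e}_2\mid\bx)=p(\boldsymbol{e}_3\mid\bx)=\tfrac14+\tfrac{\varepsilon}{2}.
\]
For small $\varepsilon>0$ each bit marginal equals the probability of the corresponding singleton, which is strictly below $\tfrac12$. Ties therefore do not arise, and the bitwise decoder returns $\boldsymbol{0}$. This is where $m\ge3$ is used: with only two singletons the probabilities cannot both stay below $\tfrac12$ while summing to $1$.

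\textbf{Lower bound on the optimum.} The Bayes-optimal value for $U$ is at least the value of any particular prediction, so I evaluate the prediction $\hat\by=\boldsymbol{e}_1$. We have $U_\Tan(\boldsymbol{e}_1,\boldsymbol{e}_1)=U_\Cos(\boldsymbol{e}_1,\boldsymbol{e}_1)=1$, while the similarity of $\boldsymbol{e}_1$ to $\boldsymbol{e}_2$ or $\boldsymbol{e}_3$ is $0$. Hence the expected utility of $\hat\by$ is $\tfrac12-\varepsilon$. The regret is thus at least $\tfrac12-\varepsilon$ for both metrics, and letting $\varepsilon\to0$ gives $\sup_{p(\cdot\mid\bx)}\mathcal{R}^{\mathrm{vs}\,\Bit}\ge\tfrac12$.

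\textbf{Main obstacle: the zero-vector convention for cosine.} The only delicate point is that the cosine similarity is undefined when one argument is $\boldsymbol{0}$, so I would state the convention $U_\Cos(\boldsymbol{0},\cdot)=0$ explicitly. If instead the decoder were required to output a nonzero vector, I would break the tie toward the bit with the largest marginal, so the decoder outputs $\boldsymbol{e}_1$ in the construction above and yields no regret. The fallback would then be to redesign the construction so that the bitwise output overlaps only with low-probability candidates. For Tanimoto no convention is needed, and the argument goes through verbatim.
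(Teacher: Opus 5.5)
Your proof is correct and follows the paper's route: push every bit marginal strictly below $\tfrac12$ so that $\boldsymbol{y}^\star_{\Bit}=\boldsymbol{0}$ has expected Tanimoto and cosine similarity $0$, exhibit a witness scoring $\tfrac12-\varepsilon$, and let $\varepsilon\to 0$. Your instance is simpler than the paper's, which balances all marginals at $\tfrac12-\varepsilon$ using $(1,0,\dots,0)$, its complement and the complement's Hamming neighbours, predicts the complement, and needs $\sqrt{1-\alpha}\ge 1-\alpha$ for cosine; your disjoint singletons make both expectations exactly $\tfrac12-\varepsilon$.

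The only zero-vector case you use, $U_{\Cos}(\boldsymbol{y},\boldsymbol{0})=0$ for $\boldsymbol{y}\neq\boldsymbol{0}$, is the paper's own convention (with the Bayes argmax over all of $\{0,1\}^m$), so your fallback paragraph is unnecessary. Its ``no regret'' claim is also false for cosine, since $\boldsymbol{e}_1+\boldsymbol{e}_2+\boldsymbol{e}_3$ scores $1/\sqrt{3}>\tfrac12$ on your instance.
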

\cref{app:worst_case_depending_on_m} contains the proofs of these supremas,
as well as alternative bounds on $\sup \mathcal{R}_\Bit^{\mathrm{vs\,\HRAtOne}}$, 
$\sup \mathcal{R}_\Cos^{\mathrm{vs\,\HRAtOne}}$ and $\sup \mathcal{R}_\Tan^{\mathrm{vs\,\HRAtOne}}$ 
(depending only on the number of fingerprint bits $m$).
In \cref{sec:generalization}, we discuss how all these regret analyses can be expanded to include finite-sample generalization error components.
\looseness=-1

\subsection{Practical implications of our theoretical analysis}


All theoretical results presented in this section are entirely novel, and hold implications for the future development of computational metabolomics methods.
Specifically, while our empirical results in \cref{sec:msmolemainres} use Morgan ($r=2$, 4096-bit) fingerprints, the regret analyses in \cref{sec:regret_analysis} abstracts away from specific fingerprint choice.
Instead, they rely on a notion of "similarity bands", meaning that different binary molecular representations naturally fit into the same framework.
As such, our theoretical results may be used to inform fingerprint selection.
Experiments with five alternative fingerprints (Morgan ($r\in{4,6,8}$), MAP4~\citep{capecchi2020one}, RDKit~\citep{landrum2013rdkit}) are reported in \cref{app:fpselect}; we summarize the outcomes here.

To motivate alternative fingerprint selection, consider that minimizing regret (and, hence, negating the trade-off) is favorable.
After all, many methods that predict fingerprints still primarily optimize for fingerprint similarity~\citep{baygi2024idsl_mint}.
Lower regret implies that improving prediction similarity more directly boosts retrieval performance.
Our theory shows that the regret when maximizing vectorwise similarity w.r.t.\ HR@1 (and vice versa) depend on both (1) conditional probability distributions $p(\boldsymbol{y}~|~\boldsymbol{x})$, and (2) $\sigma_{\min}$ and $\sigma_{\max}$.
While the former are unknown, the latter can be computed from data (as in \cref{sec:extra_results}, \cref{fig:sigmas}), making them potentially useful for model design.

To illustrate, we trained models with IoU and Contrastive (Emb-Cos) losses across all alternative fingerprints.
The results align with theory: fingerprints for which the theory suggests that regret should be lower, show smaller differences in HR@1 between both types of loss functions (\cref{app:fpselect}, \cref{tab:fp_selection}).
Among those, MAP4 performs competitively or better than Morgan ($r=2$), suggesting it is a promising complementary or alternative fingerprint prediction target for computational metabolomics.

\section{Discussion}

In this study, we observed a consistent Pareto trade‑off between (1) fingerprint similarity and (2) molecular retrieval: improving one typically degrades the other (\cref{fig:inverse_corr}).
Using Bayes-risk analysis, we have theoretically explained why these outcomes are expected and defined regret bounds on how different the optimal decisions by each model may be.

Most concurrent research uses modeling strategies that optimize for predicting correct fingerprints~\citep{duhrkop2015searching, goldman2023annotating}.
All the while, these models have molecular retrieval as their implicit goal.
Our experimental results straightforwardly show that---if the goal is to perform molecular retrieval---the loss function should be chosen with this task in mind (i.e., using a contrastive loss). Moreover, our theoretical results confirm that the regret between the decisions for either (1) optimal similarity or (2) optimal retrieval can be large.
It is trivial to show that, when the conditional class distribution is degenerate (i.e., $p(\boldsymbol{y}~|~\boldsymbol{x})=1$ for the true candidate), there is no regret.
As such, our empirical results can only be explained in the context of assuming uncertainty in the output class distribution.
It is known that molecular retrieval is a generally hard problem because of both (1) the insufficiency of signal in the input mass spectra, and (2) the vast complexity of chemical space~\citep{da2015illuminating}.
These two issues connect to the notions of aleatoric and epistemic uncertainty, respectively~\citep{hullermeier2021aleatoric}.
For this reason, we consider uncertainty estimation to be a valuable future research direction in this field.

Minimizing regret is desirable, as obtaining models excelling at both tasks is practically useful.
\cref{thm:regret_hr1_to_vecsim,thm:regret_vecsim_to_hr1}
show that for some choices of similarity function and/or fingerprints type, the regret may actually be small.
This finding connects to an emerging are of research broadly investigating how the choice of molecular representation impacts model performance~\citep{huber2025count}.
In \cref{app:fpselect}, we presented results using alternative fingerprints, revealing that MAP4 performs comparably to Morgan ($r=2$) fingerprints, but with lower regrets.
We anticipate that these theoretical insights will inform subsequent research on fingerprint selection.

Finally, we note that our regret bounds hold for any generic similarity function operating on bitvector candidate sets.
As such, the trade-off we characterize may therefore arise in other domains where binary representations are used for similarity-based retrieval, such as item retrieval using learned hashes.
Empirical verification in such settings remains for future work.

\section*{Code and Data}
This work builds on openly-available data resources.
Source code to reproduce all experimental results is available at \url{https://github.com/gdewael/ms-mole}.

\section*{Impact Statement}

This paper presents work whose goal is to advance the field of Machine Learning.
There are many potential societal consequences of our work, none which we feel must be specifically highlighted here.
Improved molecular identification from mass spectrometry data supports applications in drug discovery, clinical diagnostics, and environmental monitoring.
Our contributions are methodological and do not raise concerns beyond those well established when advancing computational methods for chemistry.

\bibliography{example_paper}

@article{stravs2024decade,
	title={A Decade of Computational Mass Spectrometry from Reference Spectra to Deep Learning},
	author={Stravs, Michael A},
	journal={Chimia},
	volume={78},
	number={7-8},
	pages={525--530},
	year={2024}
}

@article{waegeman2014,
author = {Waegeman, Willem and Dembczy\'{n}ki, Krzysztof and Jachnik, Arkadiusz and Cheng, Weiwei and H\"{u}llermeier, Eyke},
title = {On the bayes-optimality of F-measure maximizers},
year = {2014},
issue_date = {January 2014},
publisher = {JMLR.org},
volume = {15},
number = {1},
issn = {1532-4435},
journal = {J. Mach. Learn. Res.},
month = jan,
pages = {3333–3388},
numpages = {56}
}

@article{Mortier2023,
author = {Mortier, Thomas and Wydmuch, Marek and Dembczy\'{n}ski, Krzysztof and H\"{u}llermeier, Eyke and Waegeman, Willem},
title = {Efficient set-valued prediction in multi-class classification},
year = {2021},
issue_date = {Jul 2021},
publisher = {Kluwer Academic Publishers},
address = {USA},
volume = {35},
number = {4},
issn = {1384-5810},
url = {https://doi.org/10.1007/s10618-021-00751-x},
doi = {10.1007/s10618-021-00751-x},
journal = {Data Min. Knowl. Discov.},
month = jul,
pages = {1435–1469},
numpages = {35}
}

@InProceedings{Ye2012,
  Title                    = {Optimizing {F}-measures: a tale of two approaches},
  Author                   = {N. Ye and K. Chai and W. Lee and H. Chieu},
  Booktitle                = {Proceedings of the International Conference on Machine Learning},
  Year                     = {2012}
}

@inproceedings{Hazan2010,
 author = {Hazan, Tamir and Keshet, Joseph and McAllester, David},
 booktitle = {Advances in Neural Information Processing Systems},
 editor = {J. Lafferty and C. Williams and J. Shawe-Taylor and R. Zemel and A. Culotta},
 pages = {},
 publisher = {Curran Associates, Inc.},
 title = {Direct Loss Minimization for Structured Prediction},
 url = {https://proceedings.neurips.cc/paper_files/paper/2010/file/ca8155f4d27f205953f9d3d7974bdd70-Paper.pdf},
 volume = {23},
 year = {2010}
}

@inproceedings{
bohde2025diffms,
title={Diff{MS}: Diffusion Generation of Molecules Conditioned on Mass Spectra},
author={Montgomery Bohde and Mrunali Manjrekar and Runzhong Wang and Shuiwang Ji and Connor W. Coley},
booktitle={Forty-second International Conference on Machine Learning},
year={2025},
url={https://openreview.net/forum?id=EvILcv2v8L}
}

@article{Berman2017TheLL,
  title={The Lovasz-Softmax Loss: A Tractable Surrogate for the Optimization of the Intersection-Over-Union Measure in Neural Networks},
  author={Maxim Berman and A. Triki and Matthew B. Blaschko},
  journal={2018 IEEE/CVF Conference on Computer Vision and Pattern Recognition},
  year={2017},
  pages={4413-4421},
  url={https://api.semanticscholar.org/CorpusID:4716955}
}

@article{Patel2021RecallkSL,
  title={Recall@k Surrogate Loss with Large Batches and Similarity Mixup},
  author={Yash J. Patel and Giorgos Tolias and Jiri Matas},
  journal={2022 IEEE/CVF Conference on Computer Vision and Pattern Recognition (CVPR)},
  year={2021},
  pages={7492-7501},
  url={https://api.semanticscholar.org/CorpusID:237291532}
}

@inproceedings{Menon2019,
 author = {Menon, Aditya K and Rawat, Ankit Singh and Reddi, Sashank and Kumar, Sanjiv},
 booktitle = {Advances in Neural Information Processing Systems},
 editor = {H. Wallach and H. Larochelle and A. Beygelzimer and F. d\textquotesingle Alch\'{e}-Buc and E. Fox and R. Garnett},
 pages = {},
 publisher = {Curran Associates, Inc.},
 title = {Multilabel reductions: what is my loss optimising?},
 url = {https://proceedings.neurips.cc/paper_files/paper/2019/file/da647c549dde572c2c5edc4f5bef039c-Paper.pdf},
 volume = {32},
 year = {2019}
}

@InProceedings{dembczynski13,
  title = 	 {Optimizing the F-Measure in Multi-Label Classification: Plug-in Rule Approach versus Structured Loss Minimization},
  author = 	 {Dembczynski, Krzysztof and Jachnik, Arkadiusz and Kotlowski, Wojciech and Waegeman, Willem and Huellermeier, Eyke},
  booktitle = 	 {Proceedings of the 30th International Conference on Machine Learning},
  pages = 	 {1130--1138},
  year = 	 {2013},
  editor = 	 {Dasgupta, Sanjoy and McAllester, David},
  volume = 	 {28},
  number =       {3},
  series = 	 {Proceedings of Machine Learning Research},
  address = 	 {Atlanta, Georgia, USA},
  month = 	 {17--19 Jun},
  publisher =    {PMLR},
  url = 	 {https://proceedings.mlr.press/v28/dembczynski13.html}
}

@inproceedings{chierchetti2010,
author = {Chierichetti, Flavio and Kumar, Ravi and Pandey, Sandeep and Vassilvitskii, Sergei},
title = {Finding the Jaccard median},
year = {2010},
isbn = {9780898716986},
publisher = {Society for Industrial and Applied Mathematics},
address = {USA},
booktitle = {Proceedings of the Twenty-First Annual ACM-SIAM Symposium on Discrete Algorithms},
pages = {293–311},
numpages = {19},
location = {Austin, Texas},
series = {SODA '10}
}

@article{de2022good,
	title={Good practices and recommendations for using and benchmarking computational metabolomics metabolite annotation tools},
	author={de Jonge, Niek F and Mildau, Kevin and Meijer, David and Louwen, Joris JR and Bueschl, Christoph and Huber, Florian and van der Hooft, Justin JJ},
	journal={Metabolomics},
	volume={18},
	number={12},
	pages={103},
	year={2022},
	publisher={Springer}
}

@article{bushuiev2024massspecgym,
	title={MassSpecGym: A benchmark for the discovery and identification of molecules},
	author={Bushuiev, Roman and Bushuiev, Anton and de Jonge, Niek and Young, Adamo and Kretschmer, Fleming and Samusevich, Raman and Heirman, Janne and Wang, Fei and Zhang, Luke and D{\"u}hrkop, Kai and others},
	journal={Advances in Neural Information Processing Systems},
	volume={37},
	pages={110010--110027},
	year={2024}
}

@article{bajusz2015tanimoto,
	title={Why is Tanimoto index an appropriate choice for fingerprint-based similarity calculations?},
	author={Bajusz, D{\'a}vid and R{\'a}cz, Anita and H{\'e}berger, K{\'a}roly},
	journal={Journal of cheminformatics},
	volume={7},
	pages={1--13},
	year={2015},
	publisher={Springer}
}

@article{butler2023ms2mol,
	title={MS2Mol: A transformer model for illuminating dark chemical space from mass spectra},
	author={Butler, Thomas and Frandsen, Abraham and Lightheart, Rose and Bargh, Brian and Taylor, James and Bollerman, TJ and Kerby, Thomas and West, Kiana and Voronov, Gennady and Moon, Kevin and others},
	year={2023}
}

@article{da2015illuminating,
	title={Illuminating the dark matter in metabolomics},
	author={da Silva, Ricardo R and Dorrestein, Pieter C and Quinn, Robert A},
	journal={Proceedings of the National Academy of Sciences},
	volume={112},
	number={41},
	pages={12549--12550},
	year={2015},
	publisher={National Academy of Sciences}
}

@article{huber2021ms2deepscore,
	title={MS2DeepScore: a novel deep learning similarity measure to compare tandem mass spectra},
	author={Huber, Florian and van der Burg, Sven and van der Hooft, Justin JJ and Ridder, Lars},
	journal={Journal of cheminformatics},
	volume={13},
	number={1},
	pages={84},
	year={2021},
	publisher={Springer}
}

@article{nam2017maximizing,
	title={Maximizing subset accuracy with recurrent neural networks in multi-label classification},
	author={Nam, Jinseok and Loza Menc{\'\i}a, Eneldo and Kim, Hyunwoo J and F{\"u}rnkranz, Johannes},
	journal={Advances in neural information processing systems},
	volume={30},
	year={2017}
}

@article{kretschmer2025coverage,
	title={Coverage bias in small molecule machine learning},
	author={Kretschmer, Fleming and Seipp, Jan and Ludwig, Marcus and Klau, Gunnar W and B{\"o}cker, Sebastian},
	journal={Nature Communications},
	volume={16},
	number={1},
	pages={554},
	year={2025},
	publisher={Nature Publishing Group UK London}
}

@article{hullermeier2021aleatoric,
	title={Aleatoric and epistemic uncertainty in machine learning: An introduction to concepts and methods},
	author={H{\"u}llermeier, Eyke and Waegeman, Willem},
	journal={Machine Learning},
	volume={110},
	pages={457--506},
	year={2021},
	publisher={Springer}
}

@article{rogers2010extended,
	title={Extended-connectivity fingerprints},
	author={Rogers, David and Hahn, Mathew},
	journal={Journal of chemical information and modeling},
	volume={50},
	number={5},
	pages={742--754},
	year={2010},
	publisher={ACS Publications}
}

@inproceedings{cao2007learning,
	title={Learning to rank: from pairwise approach to listwise approach},
	author={Cao, Zhe and Qin, Tao and Liu, Tie-Yan and Tsai, Ming-Feng and Li, Hang},
	booktitle={Proceedings of the 24th international conference on Machine learning},
	pages={129--136},
	year={2007}
}

@article{mikolov2013distributed,
	title={Distributed representations of words and phrases and their compositionality},
	author={Mikolov, Tomas and Sutskever, Ilya and Chen, Kai and Corrado, Greg S and Dean, Jeff},
	journal={Advances in neural information processing systems},
	volume={26},
	year={2013}
}

@inproceedings{chen2020simple,
	title={A simple framework for contrastive learning of visual representations},
	author={Chen, Ting and Kornblith, Simon and Norouzi, Mohammad and Hinton, Geoffrey},
	booktitle={International conference on machine learning},
	pages={1597--1607},
	year={2020},
	organization={PmLR}
}

@article{wang2023jaccard,
	title={Jaccard metric losses: optimizing the Jaccard index with soft labels},
	author={Wang, Zifu and Ning, Xuefei and Blaschko, Matthew},
	journal={Advances in Neural Information Processing Systems},
	volume={36},
	pages={75259--75285},
	year={2023}
}

@article{baygi2024idsl_mint,
  title={IDSL\_MINT: A deep learning framework to predict molecular fingerprints from mass spectra},
  author={Baygi, Sadjad Fakouri and Barupal, Dinesh Kumar},
  journal={Journal of Cheminformatics},
  volume={16},
  number={1},
  pages={8},
  year={2024},
  publisher={Springer}
}

@inproceedings{nowozin2014optimal,
	title={Optimal decisions from probabilistic models: the intersection-over-union case},
	author={Nowozin, Sebastian},
	booktitle={Proceedings of the IEEE conference on computer vision and pattern recognition},
	pages={548--555},
	year={2014}
}

@article{landrum2013rdkit,
  title={Rdkit documentation},
  author={Landrum, Greg},
  journal={Release},
  volume={1},
  number={1-79},
  pages={4},
  year={2013}
}

@article{capecchi2020one,
  title={One molecular fingerprint to rule them all: drugs, biomolecules, and the metabolome},
  author={Capecchi, Alice and Probst, Daniel and Reymond, Jean-Louis},
  journal={Journal of cheminformatics},
  volume={12},
  number={1},
  pages={43},
  year={2020},
  publisher={Springer}
}

@article{huber2025count,
  title={Count your bits: more subtle similarity measures using larger radius count vectors},
  author={Huber, Florian and Pollmann, Julian},
  journal={bioRxiv},
  pages={2025--06},
  year={2025},
  publisher={Cold Spring Harbor Laboratory}
}

@article{duhrkop2015searching,
	title={Searching molecular structure databases with tandem mass spectra using CSI: FingerID},
	author={D{\"u}hrkop, Kai and Shen, Huibin and Meusel, Marvin and Rousu, Juho and B{\"o}cker, Sebastian},
	journal={Proceedings of the National Academy of Sciences},
	volume={112},
	number={41},
	pages={12580--12585},
	year={2015},
	publisher={National Academy of Sciences}
}

@article{cereto2015molecular,
	title={Molecular fingerprint similarity search in virtual screening},
	author={Cereto-Massagu{\'e}, Adri{\`a} and Ojeda, Mar{\'\i}a Jos{\'e} and Valls, Cristina and Mulero, Miquel and Garcia-Vallv{\'e}, Santiago and Pujadas, Gerard},
	journal={Methods},
	volume={71},
	pages={58--63},
	year={2015},
	publisher={Elsevier}
}

@article{dreams,
	title={Emergence of molecular structures from repository-scale self-supervised learning on tandem mass spectra},
	DOI={10.26434/chemrxiv-2023-kss3r-v2},
	journal={ChemRxiv},
	author={Bushuiev, Roman and Bushuiev, Anton and Samusevich, Raman and Brungs, Corinna and Sivic, Josef and Pluskal, Tomáš},
	year={2024}
}

@article{goldman2023annotating,
	title={Annotating metabolite mass spectra with domain-inspired chemical formula transformers},
	author={Goldman, Samuel and Wohlwend, Jeremy and Stra{\v{z}}ar, Martin and Haroush, Guy and Xavier, Ramnik J and Coley, Connor W},
	journal={Nature Machine Intelligence},
	volume={5},
	number={9},
	pages={965--979},
	year={2023},
	publisher={Nature Publishing Group UK London}
}

@article{dembczynski2012label,
	title={On label dependence and loss minimization in multi-label classification},
	author={Dembczy{\'n}ski, Krzysztof and Waegeman, Willem and Cheng, Weiwei and H{\"u}llermeier, Eyke},
	journal={Machine Learning},
	volume={88},
	pages={5--45},
	year={2012},
	publisher={Springer}
}

@inproceedings{lin2017focal,
	title={Focal loss for dense object detection},
	author={Lin, Tsung-Yi and Goyal, Priya and Girshick, Ross and He, Kaiming and Doll{\'a}r, Piotr},
	booktitle={Proceedings of the IEEE international conference on computer vision},
	pages={2980--2988},
	year={2017}
}

@article{duhrkop2019sirius,
	title={SIRIUS 4: a rapid tool for turning tandem mass spectra into metabolite structure information},
	author={D{\"u}hrkop, Kai and Fleischauer, Markus and Ludwig, Marcus and Aksenov, Alexander A and Melnik, Alexey V and Meusel, Marvin and Dorrestein, Pieter C and Rousu, Juho and B{\"o}cker, Sebastian},
	journal={Nature methods},
	volume={16},
	number={4},
	pages={299--302},
	year={2019},
	publisher={Nature Publishing Group US New York}
}

@article{stravs2022msnovelist,
	title={MSNovelist: de novo structure generation from mass spectra},
	author={Stravs, Michael A and D{\"u}hrkop, Kai and B{\"o}cker, Sebastian and Zamboni, Nicola},
	journal={Nature Methods},
	volume={19},
	number={7},
	pages={865--870},
	year={2022},
	publisher={Nature Publishing Group US New York}
}

@article{kretschmer2023small,
  title={Small molecule machine learning: All models are wrong, some may not even be useful},
  author={Kretschmer, Fleming and Seipp, Jan and Ludwig, Marcus and Klau, Gunnar W and B{\"o}cker, Sebastian},
  journal={bioRxiv},
  pages={2023--03},
  year={2023},
  publisher={Cold Spring Harbor Laboratory}
}

@article{bartlett2006convexity,
  title={Convexity, classification, and risk bounds},
  author={Bartlett, Peter L and Jordan, Michael I and McAuliffe, Jon D},
  journal={Journal of the American Statistical Association},
  volume={101},
  number={473},
  pages={138--156},
  year={2006},
  publisher={Taylor \& Francis}
}

@book{berger2013statistical,
  title={Statistical decision theory and Bayesian analysis},
  author={Berger, James O},
  year={2013},
  publisher={Springer Science \& Business Media}
}

@book{devroye2013probabilistic,
  title={A probabilistic theory of pattern recognition},
  author={Devroye, Luc and Gy{\"o}rfi, L{\'a}szl{\'o} and Lugosi, G{\'a}bor},
  volume={31},
  year={2013},
  publisher={Springer Science \& Business Media}
}

@inproceedings{yang2020consistency,
  title={On the consistency of top-k surrogate losses},
  author={Yang, Forest and Koyejo, Sanmi},
  booktitle={International Conference on Machine Learning},
  pages={10727--10735},
  year={2020},
  organization={PMLR}
}

@article{chen2024cmssp,
  title={Cmssp: A contrastive mass spectra-structure pretraining model for metabolite identification},
  author={Chen, Lu and Xia, Bing and Wang, Yu and Huang, Xia and Gu, Yucheng and Wu, Wenlin and Zhou, Yan},
  journal={Analytical Chemistry},
  volume={96},
  number={42},
  pages={16871--16881},
  year={2024},
  publisher={ACS Publications}
}

@article{kalia2025jestr,
  title={JESTR: J oint E mbedding S pace T echnique for R anking Candidate Molecules for the Annotation of Untargeted Metabolomics Data},
  author={Kalia, Apurva and Zhou Chen, Yan and Krishnan, Dilip and Hassoun, Soha},
  journal={Bioinformatics},
  pages={btaf354},
  year={2025},
  publisher={Oxford University Press}
}

@article{hendrycks2016gaussian,
	title={Gaussian error linear units (gelus)},
	author={Hendrycks, Dan and Gimpel, Kevin},
	journal={arXiv preprint arXiv:1606.08415},
	year={2016}
}

@article{ba2016layer,
	title={Layer normalization},
	author={Ba, Jimmy Lei and Kiros, Jamie Ryan and Hinton, Geoffrey E},
	journal={arXiv preprint arXiv:1607.06450},
	year={2016}
}

@article{srivastava2014dropout,
	title={Dropout: a simple way to prevent neural networks from overfitting},
	author={Srivastava, Nitish and Hinton, Geoffrey and Krizhevsky, Alex and Sutskever, Ilya and Salakhutdinov, Ruslan},
	journal={The journal of machine learning research},
	volume={15},
	number={1},
	pages={1929--1958},
	year={2014},
	publisher={JMLR. org}
}
\bibliographystyle{icml2026}

\newpage
\appendix
\onecolumn
\section{Supplementary methods} \label{sec:suppmethodsmsmole}

\paragraph{Model architecture.}
The full fingerprint prediction model, denoted by $f$, produces predictions $\hat{\boldsymbol{y}}\in[0,1]^m$ according to:
\begin{equation*}
\hat{\boldsymbol{y}}^{(i)} = f(\boldsymbol{x}^{(i)}) = \sigma(\boldsymbol{W}_\text{out} E(\boldsymbol{x}^{(i)})),
\end{equation*}
where $\sigma$ denotes the sigmoid operation, and $\boldsymbol{W}_\text{out} \in \mathbb{R}^{m \times d}$ denotes the learnable weights of the last linear layer.
All other neural network layers are included in $E$.
In all experiments, $E$ uses the same MLP architecture consisting of 3 fully-connected layers.
Inbetween each layer, a GeLU activation~\cite{hendrycks2016gaussian}, dropout~\cite{srivastava2014dropout} (using a rate of 0.25) and LayerNorm operation~\cite{ba2016layer} is placed, in that order.
The first layer maps the 10050-dimensional space to 1024 dimensions, which is also the hidden dimensionality used in all subsequent layers.
More formally, let $B(\cdot)=\text{LayerNorm}(\text{Dropout}(\text{GeLU}(\cdot), 0.25))$.
the neural network embedder $E$ is, then, described fully as:
\begin{equation*}
	E(\boldsymbol{x}^{(i)})=\boldsymbol{W}_4 \cdot B(\boldsymbol{W}_3 \cdot B(\boldsymbol{W}_2 \cdot B(\boldsymbol{W}_1\boldsymbol{x}^{(i)})))
\end{equation*}
with $\boldsymbol{W}_1\in\mathbb{R}^{1024\times 10050}$, $\boldsymbol{W}_{\{2,3\}}\in\mathbb{R}^{1024\times 1024}$, and $\boldsymbol{W}_4\in\mathbb{R}^{1024\times 1024}$, the learnable weight matrices of the linear layers (with biases omitted for simplicity).

Models were trained using one of the eight loss functions explained in \cref{sec:loss_fp}: (1) BCE, (2) FL, (3) Cosine Loss, (4) IoU Loss, (5) Contrastive FP-Cos, (6) Contrastive Emb-Cos, (7) Contrastive Fp-MLP, and (8) Contrastive Emb-MLP.
For the contrastive MLP variants, additional layers are also fitted that either take (1) the predicted and true fingerprint as input ("Fp-MLP"), or (2) an embedding of the spectrum and candidate molecule as input ("Emb-MLP").
In the former case, the MLP is of the form:
\begin{equation*}
	\text{MLP}(\hat{\boldsymbol{y}}, \boldsymbol{c}_j) = \boldsymbol{W}_3 \cdot B(\boldsymbol{W}_2 \cdot B(\boldsymbol{W}_1 (\hat{\boldsymbol{y}}~\|~\boldsymbol{c}_j~\|~ \hat{\boldsymbol{y}}\odot\boldsymbol{c}_j) ) ).
\end{equation*}
The input to this similarity-learning MLP, hence, uses a concatenation of (1) the predicted fingerprint $\hat{\boldsymbol{y}}\in[0,1]^{4096}$, (2) a candidate fingerprint $\boldsymbol{c}_j\in\{0,1\}^{4096}$, and (3) their element-wise product $\hat{\boldsymbol{y}}\odot\boldsymbol{c}_j \in[0,1]^{4096}$.
The MLP uses learnable linear layers with weight matrices (biases omitted for simplicity): $\boldsymbol{W}_1\in\mathbb{R}^{1536\times 12288}$, $\boldsymbol{W}_2\in\mathbb{R}^{768\times 1536}$, and $\boldsymbol{W}_3\in\mathbb{R}^{1\times 768}$.
The function $B$ denotes the same block as in the main fingerprint-producing MLP: $B(\cdot)=\text{LayerNorm}(\text{Dropout}(\text{GeLU}(\cdot), 0.25))$.
In the latter case ("Emb-MLP"), the MLP is of a similar form.
Let us denote the spectrum embedding and candidate embedding as $\boldsymbol{e}_\text{sp}=\boldsymbol{W}_\text{sp}E(\boldsymbol{x})$ and $\boldsymbol{e}_{\text{c},j} = \boldsymbol{W}_\text{c}\boldsymbol{c}_j$.
The MLP is the "Emb-MLP" contrastive model is, then:
\begin{equation*}
	\text{MLP}(\boldsymbol{e}_\text{sp}, \boldsymbol{e}_{\text{c},j}) = \boldsymbol{W}_3 \cdot B(\boldsymbol{W}_2 \cdot B(\boldsymbol{W}_1 (\boldsymbol{e}_\text{sp}~\|~\boldsymbol{e}_{\text{c},j}~\|~ \boldsymbol{e}_\text{sp}\odot\boldsymbol{e}_{\text{c},j}) ) ),
\end{equation*}
where $\boldsymbol{W}_1\in\mathbb{R}^{96\times 768}$, $\boldsymbol{W}_2\in\mathbb{R}^{48\times 96}$, and $\boldsymbol{W}_3\in\mathbb{R}^{1\times 48}$ denote the learnable linear weight matrices (biases omitted for simplicity).

\paragraph{Main experiments.}
All models were trained with the Adam optimizer in float-32 precision using a batch size of 64 for a maximum of 50 epochs.
Gradients were clipped to a norm of 1.
A variable learning rate (however, fixed during one run) was used depending on the hyperparameter configuration (see below).
After every 1000 training steps, validation performance was checked in terms of: (1) HR@1, (2) HR@5, (3) HR@20, and (4) Tanimoto similarity.
For each of these metrics, the checkpoint was saved for when that metric was optimal (i.e., maximal).
For every model, the appropriate checkpoint was used to compute final validation and test performance.
For example, reported validation and test HR@20 scores were always derived from model checkpoints with optimal validation HR@20s during their training runs.
This was performed because the convergence in terms of Tanimoto similarity may not coincide with the convergence in terms of retrieval.
For bitwise and vectorwise loss functions, hit rates were computed (and checkpoints made) for both retrieval settings.
For contrastive loss functions---as these models use the candidates during training---separate model runs were performed for the two retrieval settings.

For every loss function, five different learning were tested: $\{0.00005, 0.00007$, $0.0001$, $0.0003$, $0.0005\}$.
For each of these possible learning rates, 5 models were trained.
The best learning rate was then chosen based on the highest average HR@20 that learning rate achieved.
All other metrics are then reported using the checkpoints derived from those 5 models.
For the focal loss, the $\gamma$ hyperparameter was additionally tuned in the same manner, but using a fixed learning rate of $0.00007$ (a value chosen because it was the optimal value for the BCE loss). Results are in Appendix~\cref{tab:gamma_tune}.
For the contrastive loss function, the same was performed for different values of $\tau$, results are in Appendix~\cref{tab:tau_tune}.
Upon finding the optimal value of their respective hyperparameters, an identical procedure was performed for finding the optimal learning rate and reporting results as with the other models.

\section{Related Work} \label{app:related_work}

Early computational approaches for small-molecule identification from MS/MS data framed the task as molecular property prediction, in which a model predicts a binary structural fingerprint used for database retrieval. A foundational line of work is CSI:FingerID~\citep{duhrkop2015searching}, which trains a large collection of support vector machines to predict individual fingerprint bits using molecular fragmentation tree-based inputs. CSI:FingerID is optimized explicitly for bitwise accuracy, and its fingerprints are used with Tanimoto similarity for retrieval. Although influential, CSI:FingerID predates large supervised datasets such as MassSpecGym, and its learning setup (bit-independent classifiers, handcrafted features, and no end-to-end neural representation) differs substantially from the deep learning architectures evaluated in this study. Importantly, CSI:FingerID optimizes each bit in isolation, placing it firmly in the “bitwise losses” category examined here.

More recent work has moved toward neural models that learn joint representations of spectra and structures. MIST~\citep{goldman2023annotating} represents the most relevant contemporary baseline. MIST trains a chemical-formula-aware transformer to predict fingerprints in two stages.
First, it optimizes for fingerprint accuracy using the Cosine loss, after which a fine-tuning stage optimizes for retrieval with a contrastive loss. Final retrieval is performed via embedding-space cosine similarity rather than via predicted binary fingerprints. Because MIST is trained with listwise contrastive objectives and uses learned projections of candidate fingerprints, its final optimization target aligns directly with HR@1 retrieval, not with fingerprint similarity. Our contrastive Emb-Cos variants mirror the contrastive fine-tuning setup of MIST.
Interestingly, although not elaborated there, MIST observes a similar trade-off to the one studied in this manuscript.
Specifically, the initial fingerprint predictor optimized with Cosine loss outperforms CSI:FingerID in fingerprint prediction accuracy but underperforms it in retrieval.
For the purpose of examining this trade-off in better detail, we have chosen a simpler architectural set-up.
Specifically, our work uses a simple MLP over binned spectra rather than a chemical formula transformer.
Table~\ref{tab:msmole_maintabv2} shows that well-tuned variants of such MLPs can still obtain competitive (even state-of-the-art) performances.

Beyond these two lines of work, other recent methods (e.g., CMSSP, \citet{chen2024cmssp}, JESTR, \citet{kalia2025jestr})  also employ contrastive or metric-learning formulations, again emphasizing retrieval-optimized training objectives over fingerprint accuracy. Collectively, this emerging trend aligns with our empirical and theoretical results: methods directly optimizing a retrieval-aligned objective (contrastive/listwise losses) outperform fingerprint-centric objectives, but often at the expense of fingerprint similarity. Our work is the first to systematically quantify and theoretically explain this trade-off, providing a framework that clarifies when and why approaches such as MIST, CSI:FingerID, and contrastive neural retrieval models differ in behavior on benchmarks like MassSpecGym.

\section{Figures and Tables supporting the results section}
\label{sec:extra_results}

\begin{figure}[H]
	\centering
	\includegraphics[width=0.9\linewidth]{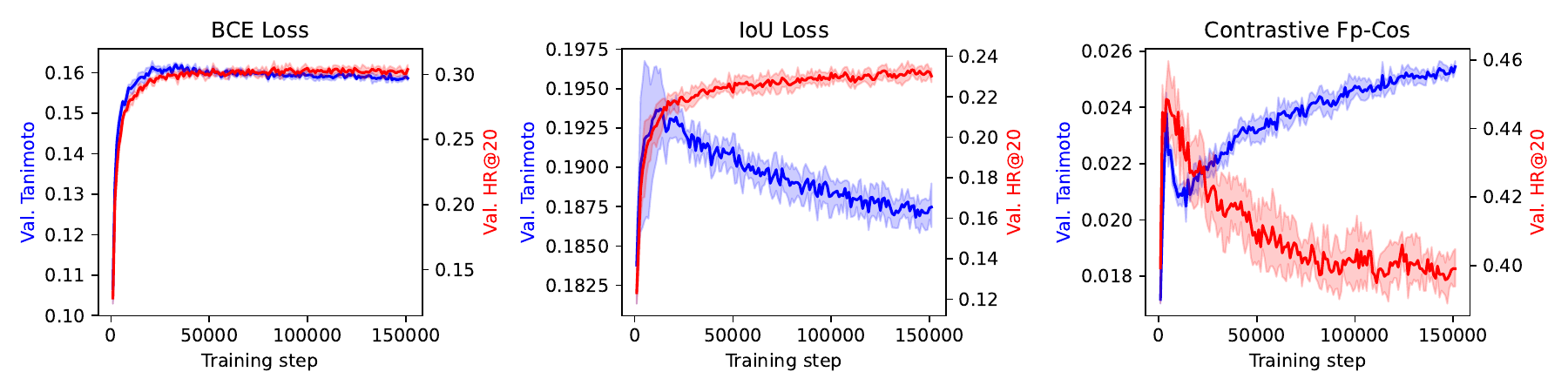}
	\caption{
		Validation performance (in terms of Average Tanimoto, blue, and HR@20, red) over training time.
        Curves are shown for the BCE Loss, IoU Loss, and Contrastive FP-Cos, respectively.
        Note that the Contrastive FP-Cos curve is obtained using a temperature of 1/256.
        As such, the model is tuned for better hit rates (see Table~\cref{tab:tau_tune}).
        In addition, note that our experimental setup accounts for different optima in time w.r.t.\ different metrics, as final scores are reported using the checkpoint for which each metric was optimal.
        }
	\label{fig:val_over_time}
\end{figure}

\begin{table}[H]
    \centering
    \setlength{\tabcolsep}{4pt}
    \caption{
        Focal loss $\gamma$ tuning results~\citep{bushuiev2024massspecgym}.
        All performances are validation scores.
        Numbers report averages and standard deviations over 5 model runs.
        All 20 models used in this table were trained with a learning rate of $7\text{e-}5$.
        Retrieval scores are those using equal mass candidates.
        The best hyperparameter value (bold) was chosen based on retrieval scores.
    }
    \resizebox{0.55\linewidth}{!}{
        \begin{tabular}{ccccc}
        \\ \toprule Focal $\gamma$ & Tanimoto (IoU) & HR@1 & HR@5  & HR@20 \\ \midrule
        $1.0$                     & $\mathbf{16.27\pm0.07}$    & $08.94\pm0.18$ & $18.14\pm0.37$ & $32.45\pm0.33$ \\
        $\mathbf{2.0}$                     & $16.20\pm0.06$     & $\mathbf{09.20\pm0.08}$  & $\mathbf{18.74\pm0.20}$  & $\mathbf{33.05\pm0.16}$ \\
        $5.0$                  & $15.88\pm0.10$     & $08.45\pm0.18$ & $18.10\pm0.25$  & $31.53\pm0.32$ \\
        $10.0$                 & $15.70\pm0.05$     & $06.31\pm0.23$ & $15.00\pm0.33$  & $27.72\pm0.27$ \\ \bottomrule \\
        \end{tabular}
    } \label{tab:gamma_tune}
\end{table}

\begin{table}[H]
    \centering
    \setlength{\tabcolsep}{4pt}
    \caption{
        Contrastive loss temperature $\tau$ tuning results~\citep{bushuiev2024massspecgym}.
        All performances are validation scores.
        Numbers report averages and standard deviations over 5 model runs.
        All models used in this table were trained with a learning rate of $7\text{e-}5$.
        Tanimoto scores are only reported for the "FP-Cos" contrastive model, as this is the only one that can be interpreted as predicting a true fingerprint.
        Retrieval scores are those using equal mass candidates.
        Best hyperparameter values (bold) were chosen based on retrieval scores.
    }
    \resizebox{0.75\linewidth}{!}{
\begin{tabular}{llcccc}
\\ \toprule Contrastive loss type & Temperature $\tau$ & \multicolumn{1}{c}{Tanimoto (IoU)} & \multicolumn{1}{c}{HR@1} & \multicolumn{1}{c}{HR@5} & \multicolumn{1}{c}{HR@20} \\ \midrule
FP-Cos                & $\mathbf{1/256}$          & \multicolumn{1}{l}{$02.67\pm0.03$} & $\mathbf{11.31\pm0.59}$ & $\mathbf{25.51\pm1.02}$ & $\mathbf{45.60\pm1.09}$  \\
                      & $1/64$           & \multicolumn{1}{l}{$04.06\pm0.26$} & $11.17\pm0.18$ & $24.75\pm0.62$ & $44.54\pm0.65$ \\
                      & $1/16$           & \multicolumn{1}{l}{$03.63\pm0.05$} & $09.75\pm0.29$  & $21.42\pm0.25$ & $37.66\pm0.29$ \\
                      & $1/4$            & \multicolumn{1}{l}{$06.66\pm0.05$} & $08.23\pm0.18$  & $18.32\pm0.26$ & $34.13\pm0.23$ \\
                      & $1$              & \multicolumn{1}{l}{$07.19\pm0.04$} & $08.17\pm0.12$  & $18.06\pm0.18$ & $33.48\pm0.15$ \\
                      & $4$              & \multicolumn{1}{l}{$\mathbf{07.38\pm0.04}$} & $08.14\pm0.10$  & $18.20\pm0.19$ & $33.77\pm0.16$ \\ \midrule
Emb-Cos               & $1/256$          & N/A                               & $12.04\pm0.65$ & $26.43\pm0.62$ & $45.43\pm0.62$ \\
                      & $\mathbf{1/64}$  & N/A                               & $\mathbf{12.08\pm0.28}$ & $\mathbf{27.42\pm0.75}$ & $\mathbf{47.44\pm0.78}$ \\
                      & $1/16$           & N/A                               & $10.76\pm0.15$ & $24.29\pm0.23$ & $45.68\pm0.51$ \\
                      & $1/4$            & N/A                               & $06.18\pm0.16$ & $17.64\pm0.39$ & $39.28\pm0.54$ \\
                      & $1$              & N/A                               & $04.08\pm0.09$ & $14.23\pm0.27$ & $33.81\pm0.52$ \\
                      & $4$              & N/A                               & $04.62\pm0.32$ & $14.53\pm0.41$ & $36.58\pm0.86$ \\ \midrule
Fp-MLP                & $1/256$          & N/A                               & $10.66\pm0.58$ & $25.13\pm0.51$ & $45.85\pm0.42$ \\
                      & $1/64$           & N/A                               & $10.81\pm0.58$ & $25.18\pm0.84$ & $46.28\pm1.08$ \\
                      & $\mathbf{1/16}$  & N/A                               & $\mathbf{12.60\pm0.75}$ & $\mathbf{27.84\pm0.51}$ & $\mathbf{47.88\pm0.70}$  \\
                      & $1/4$            & N/A                               & $11.53\pm0.57$ & $26.93\pm0.51$ & $47.21\pm0.36$ \\
                      & $1$              & N/A                               & $12.51\pm0.39$ & $27.48\pm0.60$ & $47.91\pm0.95$ \\
                      & $4$              & N/A                               & $12.75\pm0.19$ & $27.85\pm0.68$ & $47.36\pm0.65$ \\ \midrule
Emb-MLP               & $1/256$          & N/A                               & $10.75\pm0.44$ & $25.05\pm0.62$ & $45.89\pm0.76$ \\
                      & $1/64$           & N/A                               & $10.80\pm0.48$ & $25.66\pm0.89$ & $46.70\pm1.08$  \\
                      & $\mathbf{1/16}$  & N/A                               & $\mathbf{11.50\pm1.29}$ & $\mathbf{26.33\pm1.67}$ & $\mathbf{46.81\pm1.31}$ \\
                      & $1/4$            & N/A                               & $10.85\pm0.77$ & $26.01\pm1.18$ & $46.71\pm0.62$ \\
                      & $1$              & N/A                               & $11.35\pm1.26$ & $25.86\pm1.77$ & $46.46\pm1.74$ \\
                      & $4$              & N/A                               & $11.24\pm1.61$ & $26.12\pm1.89$ & $46.05\pm1.54$ \\ \bottomrule \\
\end{tabular}
    } \label{tab:tau_tune}
\end{table}

\begin{table}[H]
	\centering
	\setlength{\tabcolsep}{4pt}
	\caption{
		Validation retrieval scores of models using vectorwise loss functions.
		Only results are shown on the equal mass candidate retrieval setting.
		On average, performing retrieval using cosine similarity performs better than using Tanimoto similarity (i.e., IoU).
        Here, IoU retrieval is performed with continuous (i.e., non-binarized) predictions.
		This holds true even for the model trained using the IoU loss.
		Scores indicate the average and standard deviation over 5 model runs.
	}
	\resizebox{0.65\linewidth}{!}{
	\begin{tabular}{lllll}
		\\ \toprule
		\begin{tabular}[c]{@{}c@{}}Model\\ trained using\end{tabular} & \begin{tabular}[c]{@{}c@{}}Evaluate\\ retrieval using\end{tabular} & \multicolumn{1}{c}{HR@1} & \multicolumn{1}{c}{HR@5} & \multicolumn{1}{c}{HR@20} \\ \midrule
		Cosine Loss   & Cosine      & $7.49\pm0.16$	& $15.82\pm0.21$	& $29.39\pm0.21$	          \\
		& Tanimoto (IoU)            & $7.21\pm0.21$	& $15.34\pm0.14$	& $28.93\pm0.25$           \\ \midrule
		IoU Loss      & Cosine      & $4.29\pm0.21$	& $11.00\pm0.18$	& $23.66\pm0.25$           \\
		& Tanimoto (IoU)            		& $3.86\pm0.21$	& $10.45\pm0.20$	& $23.01\pm0.28$           \\ \bottomrule \\
	\end{tabular}
	} \label{tab:retrieval_func}
\end{table}

\begin{figure}[H]
	\centering
	\includegraphics[width=0.35\linewidth]{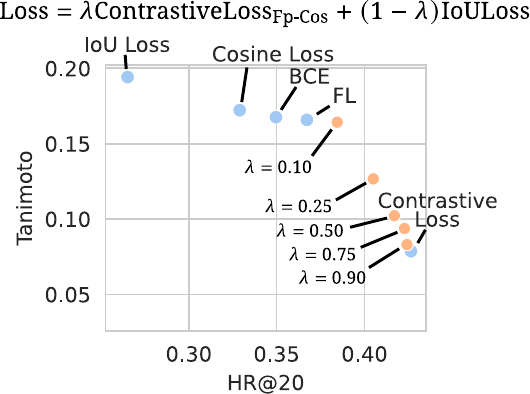}
	\caption{
		Experimental results when training models using a weighted combination of loss functions.
        In blue, results are repeated from \cref{fig:inverse_corr}.
        In orange, results are shown using a combination of contrastive loss and IoU loss, with 5 different weights $\lambda$.
        Scores indicate the mean over 5 model runs.
        Notably, Even small weights for on contrastive loss components result in a marked increase in retrieval scores over non-contrastive loss functions.
        }
	\label{fig:inverse_corr2}
\end{figure}

\begin{figure}[H]
	\centering
	\includegraphics[width=0.95\linewidth]{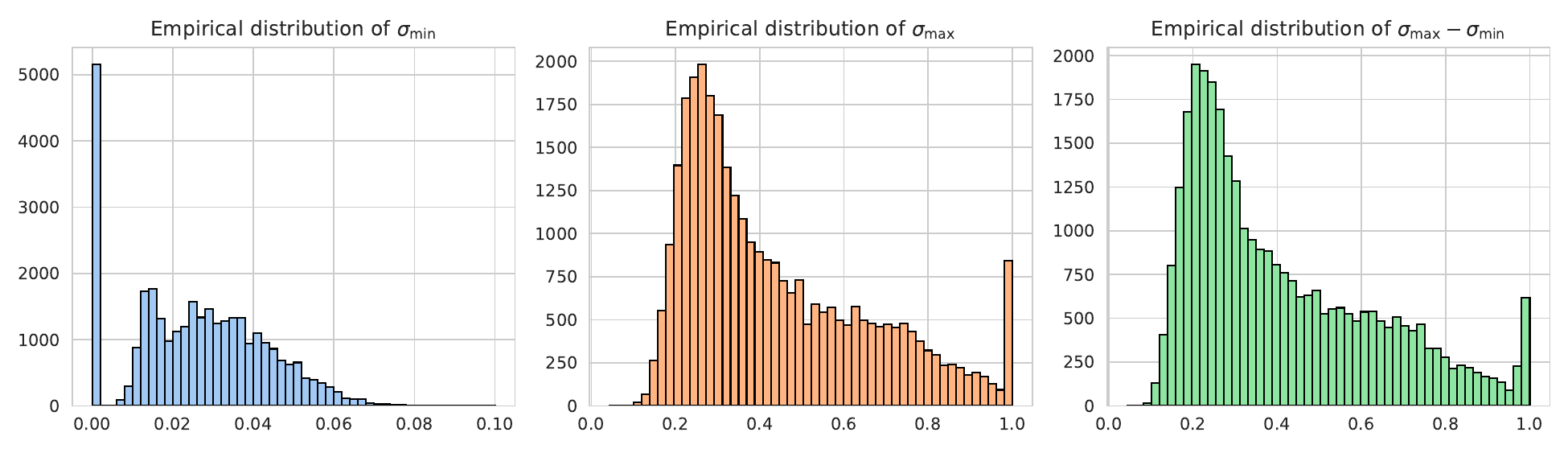}
	\caption{
		Empirical distribution of $\sigma_{\min}$, $\sigma_{\max}$, and $\sigma_{\max} - \sigma_{\min}$ 
        in all MassSpecGym equal-mass candidate sets using the 
        \textbf{Tanimoto similarity} on Morgan Fingerprints (4096 bits, radius 2) to compare candidates to true molecules.
        $\sigma_{\min}$ denotes the lowest similarity to the true molecule found in the candidate set.
        $\sigma_{\max}$ denotes the highest similarity to the true molecule found in the candidate set.
        In some cases, there are candidates with zero similarity to the true molecule (no fingerprint bits in common).
        Conversely, in some cases, there exist candidates with an equal fingerprint vector representation (i.e., $\sigma_{\max} = 1$ in some cases).
        This arises due to the problem of duplicate fingerprints.
        The median $\sigma_{\min}$ equals $0.026$, while the median $\sigma_{\max}$ equals $0.367$.
        }
	\label{fig:sigmas}
\end{figure}

\begin{figure}[H]
	\centering
	\includegraphics[width=0.95\linewidth]{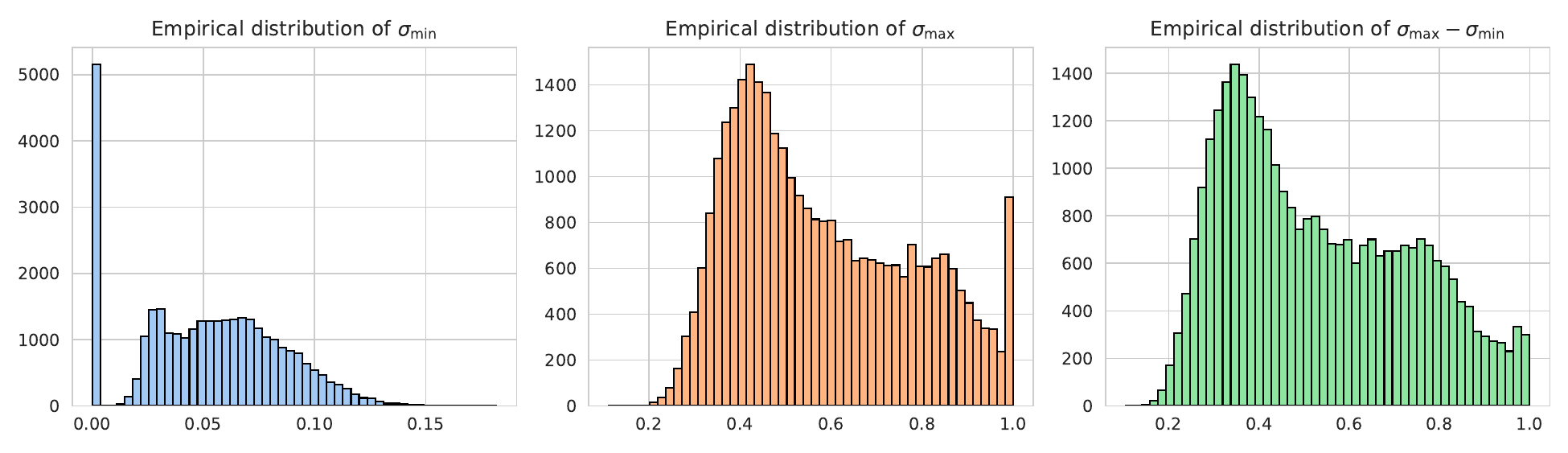}
	\caption{
		Empirical distribution of $\sigma_{\min}$, $\sigma_{\max}$, and $\sigma_{\max} - \sigma_{\min}$ 
        in all MassSpecGym equal-mass candidate sets using the \textbf{Cosine similarity} 
        on Morgan Fingerprints (4096 bits, radius 2) to compare candidates to true molecules.
        $\sigma_{\min}$ denotes the lowest similarity to the true molecule found in the candidate set.
        $\sigma_{\max}$ denotes the highest similarity to the true molecule found in the candidate set.
        In some cases, there are candidates with zero similarity to the true molecule (no fingerprint bits in common).
        Conversely, in some cases, there exist candidates with an equal fingerprint vector representation (i.e., $\sigma_{\max} =1$ in some cases).
        This arises due to the problem of duplicate fingerprints.
        The median $\sigma_{\min}$ equals $0.053$, while the median $\sigma_{\max}$ equals $0.538$.
	}
	\label{fig:cosine_sigmas}
\end{figure}

\section{The effect of contrastive negative candidate set selection} \label{sec:supp_hardnegs}

In the main results, contrastive loss function models are trained with the candidate sets provided by MassSpecGym~\citep{bushuiev2024massspecgym}.
MassSpecGym provides different candidate sets for the equal mass and equal formula candidate scenarios, both capped to a maximum of 256 candidates per input spectrum.
In our study, we evaluate each setting by training contrastive models using their respective candidate sets.
In this section, we briefly explore how training on different candidate sets influences performance results.
All experiments presented in this section are performed using the "Emb-Cos" contrastive model, as this is the one performing best overall.

As a first example, let us investigate how models trained using one candidate set perform when evaluated on the other type of candidate set.
In other words, this experiment investigates if a model trained on one candidate set can generalize its retrieval towards other retrieval settings.
Here, we take the models trained on equal mass candidates and evaluate their retrieval on equal formula candidates, and vice-versa From \cref{tab:hard_cand}, it can be seen that models trained on equal mass candidates performs better on equal formula candidates than the models trained on equal formula candidates themselves.
This can potentially be an effect of the average candidate set being bigger using equal mass candidates.
As a consequence, it is possible that equal mass candidate sets contain more "hard" or "representative" negative examples through which a more discriminative model can be trained.

\begin{table}[h!]
    \centering
    \setlength{\tabcolsep}{4pt}
    \caption{
        Molecular retrieval and fingerprint accuracy performance on MassSpecGym~\citep{bushuiev2024massspecgym}.
        The best and second-best performing models for each metric are indicated with boldface and underlined, respectively.
        For the models trained in this study, numbers report averages and standard deviations over 5 model runs.
    }
    \resizebox{1.0\linewidth}{!}{
        \begin{tabular}{llcccccc}
            \toprule
            &  &\multicolumn{3}{c}{Retrieval w. equal mass candidates}                                      & \multicolumn{3}{c}{Retrieval w. equal formula candidates}                                               \\ \cmidrule(lr){3-5} \cmidrule(lr){6-8}
            Candidate set origin & Max. (aver.) \# cands & \multicolumn{1}{c}{HR@1} & \multicolumn{1}{c}{HR@5} & \multicolumn{1}{c}{HR@20} & \multicolumn{1}{c}{HR@1} & \multicolumn{1}{c}{HR@5} & \multicolumn{1}{c}{HR@20} \\ \midrule
            MassSpecGym equal mass & 256 (252.8)       & $\mathbf{12.29 \pm 0.69}$ &	$\mathbf{26.09 \pm 1.43}$ &	$\mathbf{45.05 \pm 0.71}$     &            $\mathbf{14.17 \pm 0.39}$           & $\mathbf{28.36 \pm 0.46}$ & $\mathbf{48.86 \pm 0.94}$           \\
            MassSpecGym equal formula   & 256 (212.1)    & $10.43 \pm 0.66$ &  $21.91 \pm 1.07$ &  $39.08 \pm 0.72$  & $13.62 \pm 1.12$	& $26.33 \pm 1.06$ &	$46.50 \pm 0.94$         \\ \midrule
              "Hard" equal mass & 64 (63.9) & $07.74 \pm 0.61$	&$18.81 \pm 0.95$	&$36.4 \pm 1.62$ &$11.23 \pm 0.48$	&$24.23 \pm 0.94$	&$44.13 \pm 1.26$   \\
            "Hard" equal mass & 128 (127.7) & $08.13 \pm 0.57$	&$20.64 \pm 1.20$	&$39.81 \pm 1.70$& $12.25 \pm 0.91$	&$25.96 \pm 0.33$	&$45.73 \pm 0.62$ \\
            "Hard" equal mass & 256 (253.8) & $08.34 \pm 0.60$	&$21.51 \pm 1.64$	&$40.67 \pm 1.58$& $12.30 \pm 0.69$	&$25.86 \pm 1.40$	&$44.90 \pm 1.24$\\
            "Hard" equal mass & 512 (498.2) & $10.81 \pm 1.05$	&$23.37 \pm 0.96$	&$41.79 \pm 1.33$ &$12.38 \pm 0.68$	&$26.90 \pm 1.02$	& $45.88 \pm 1.17$  \\ 
            "Hard" equal mass & 1024 (959.5) &  $\underline{11.64 \pm 0.57}$	&$\underline{23.99 \pm 1.38}$	&$\underline{41.89 \pm 2.23}$ &  $\underline{14.10 \pm 0.60}$	&$\underline{27.55 \pm 0.85}$	& $\underline{47.15 \pm 0.87}$ \\ \midrule
            "Hard" equal formula & 64 (60.2) & $06.64 \pm 0.58$	&$17.06 \pm 1.76$	& $34.69 \pm 0.90$ &$11.47 \pm 0.36$	&$24.66 \pm 0.36$	&$43.60 \pm 0.24$          \\
            "Hard" equal formula & 128 (115.1) &$08.03 \pm 0.47$	 &$18.49 \pm 0.95$	&$34.82 \pm 1.08$ & $12.08 \pm 0.89$	&$25.24 \pm 0.57$	&$45.04 \pm 1.22$       \\
            "Hard" equal formula & 256 (214.3) & $07.66 \pm 0.38$	&$19.34 \pm 0.80$	&$37.74 \pm 1.36$ & $12.99 \pm 0.85$	&$26.46 \pm 0.59$	&$45.49 \pm 1.02$ \\
            "Hard" equal formula & 512 (385.8) &  $07.71 \pm 0.51$	&$19.95 \pm 1.73$	&$37.52 \pm 1.60$& $12.39 \pm 0.97$	&$25.64 \pm 0.90$	&$45.38 \pm 0.34$     \\
            "Hard" equal formula & 1024 (667.9) &   $08.68 \pm 1.59$	&$20.80 \pm 1.70$	&$38.91 \pm 1.12$  & $12.55 \pm 0.46$	&$27.08 \pm 0.95$	&$46.33 \pm 1.59$  \\ 
           \bottomrule
        \end{tabular}
    } \label{tab:hard_cand}
\end{table}

To further experiment with the notion of creating informative contrastive candidate sets, we create our own negative candidate sets for training models.
Note that, in this case, models are only trained using these custom negative compound sets, and are still evaluated on the MassSpecGym candidates in the end.
For this purpose, for every unique compound in MassSpecGym, we enumerate all compounds with (1) equal formula and (2) equal mass as retrieved in the PubChem 118M compound set included with MassSpecGym.
For every retrieved set, we subsequently sort the candidates by decreasing Tanimoto similarity (based on 4096 bit Morgan fingerprints), and only keep the $n$ first candidates.
In this way, candidate selection is performed to include the negative compounds which are "hardest" to distinguish by the contrastive model.
All models in this experiment were identically trained to those in the main experiments.
To only small addition is that for every max number of candidates $n\in\{64,128,256,512,1024\}$, the temperature was additionally re-tuned (identically as before in the main experiments) between values $\tau\in\{1/256, 1/64, 1/16\}$.
We perform this step as varying the number of candidates in the softmax during training may have an impact on optimal temperature values.
From \cref{tab:hard_cand}, it can be seen that there is an upward trend in performance when training models with an increasing number of negative candidates per spectrum.
Still, performance of models using hard negatives mined from PubChem do not exceed performance in comparison with the MassSpecGym equal mass negative sets.

The results in \cref{tab:hard_cand} highlight subtle differences in retrieval performance depending on the choice of negative candidate sets.
MassSpecGym constructs its negatives in a cascading manner: candidates are first drawn from a set of 1 million biological and environmental molecules, then from a set of 4 million chemically diverse molecules, and only finally from the 118 million molecules in PubChem.
Because candidate sets are truncated at a maximum of 256 molecules, in practice most negatives come from the first two sets.
In contrast, hard negatives mined from PubChem cover a much broader chemical space, the majority of which are not natural products (i.e., are not biosynthesized)~\citep{kretschmer2023small}.
In this sense, contrastive models trained on MassSpecGym candidates might only perform well for retrieval within the space of metabolites and environmental molecules (e.g., metabolomics or exposomics).
For more open-world annotation tasks, such as the identification of forensic unknowns or patent compounds, it is more realistic to train on PubChem-derived negatives rather than preferentially including metabolites.
This perspective recontextualizes the results in \cref{tab:hard_cand}: models trained on 1024 hard PubChem negatives achieve performances comparable to models trained on MassSpecGym negatives in some cases.
This trade-off may be justified for applications requiring broader chemical generalization.

\section{Proof of Bayes-optimal prediction for cosine similarity}
\label{app:Bayes-cosine}

We assume $p(\boldsymbol{y}\,|\,\boldsymbol{x})=0$ whenever $\|\boldsymbol{y}\|=0$, so that
$U_\Cos(\hat{\boldsymbol y},\boldsymbol y)$ and the quantities below are well-defined.

\thmcosinebayes*


\begin{proof}
For any nonzero $\hat{\boldsymbol y}\in\{0,1\}^m$ with support $S(\hat{\boldsymbol y})=\{i:\hat y_i=1\}$ and size
$s(\hat{\boldsymbol y})=|S(\hat{\boldsymbol y})|$, the expected cosine similarity satisfies
\begin{align*}
\mathbb{E}\!\left[U_\Cos(\hat{\boldsymbol y},\boldsymbol y)\right]
&=\sum_{\boldsymbol y}\frac{\sum_{i=1}^m \hat y_i y_i}{\|\hat{\boldsymbol y}\|\,\|\boldsymbol y\|}\,p(\boldsymbol y\,|\,\boldsymbol x)
=\frac{1}{\|\hat{\boldsymbol y}\|}\sum_{i=1}^m \hat y_i \sum_{\boldsymbol y}\frac{y_i\,p(\boldsymbol y\,|\,\boldsymbol x)}{\|\boldsymbol y\|} \\
&=\frac{1}{\|\hat{\boldsymbol y}\|}\sum_{i=1}^m \hat y_i w_i
=\frac{1}{\sqrt{s(\hat{\boldsymbol y})}}\sum_{i\in S(\hat{\boldsymbol y})} w_i.
\end{align*}
by definition of cosine similarity and linearity of expectation. As a result, maximizing the expected cosine similarity over binary predictions is equivalent to maximizing
\[
F(S)\;=\;\frac{1}{\sqrt{|S|}}\sum_{i\in S} w_i
\qquad\text{over }S\subseteq\{1,\dots,m\},\ S\neq\varnothing,
\]
where $S$ is the support of the prediction. Let $S^\star$ be the support of a maximizer and
assume, for contradiction, that there exist $i\in S^\star$ and $j\notin S^\star$ with $w_i < w_j$.
Define $S' = S^\star \cup \{j\}\setminus\{i\}$, which has the same cardinality as $S^\star$.
Then
\[
F(S')-F(S^\star) \;=\; \frac{1}{\sqrt{|S^\star|}}\big(w_j-w_i\big) \;>\; 0,
\]
contradicting optimality. Therefore $w_i\ge w_j$ whenever $i\in S^\star$ and $j\notin S^\star$.
\end{proof}

\section{The contrastive loss is a differentiable surrogate for HR@1} \label{sec:contrastiveHR}

To establish a formal link between the contrastive losses and retrieval, let us consider that a smooth surrogate for optimizing the hit rate @ $1$ could be to maximize the probability of the true element being ranked first (i.e., the top-1 probability)~\citep{yang2020consistency}.
According to ListNet~\citep{cao2007learning}, top-1 probability $P_{\mathcal{S}}(a)$ of an item $a$ given a set of scores $\mathcal{S}$ is given by:
\begin{equation*}
 	P_{\mathcal{S}}(a) = \frac{\phi(s_a)}{\sum_{s_j\in\mathcal{S}} \phi(s_j)},
\end{equation*}
where $\phi(\cdot)$ can be any increasing and strictly positive function.
When $\phi(\cdot) = \exp(\cdot)$, the top-1 probability is estimated via the softmax operation.

Given a single object as ground truth "positive" molecule (i.e., $\boldsymbol{y}$), the top-one probability can, hence, be maximized by minimizing the following cross-entropy loss $L$:
\begin{equation*}
 	L(\mathcal{S}) = - \log(P_{\mathcal{S}}(\boldsymbol{y})) = -\log\frac{\exp(s_{\boldsymbol{y}})}{\sum_{s_j\in\mathcal{S}} \exp(s_j)}.
\end{equation*}

Let us now recall that the contrastive loss \cref{eq:contrastiveloss} employs a function $g(\boldsymbol{x}, \boldsymbol{c}_j)$ to obtain a matching score between any input spectrum and candidate structure $\boldsymbol{c}_j\in\mathcal{C}$.
Substituting the scores $s$ in the equation above for the function $g$ essentially derives the contrastive loss.
As a result, optimizing contrastive losses is equivalent to the retrieval objective of top-1 probability maximization, and, consequently, HR@1 maximization.

The HR@1 and contrastive loss can also be regarded as subset 0/1 accuracy maximization in multi-class classification.
In the multi-class classification setting, a surrogate loss for subset 0/1 maximization would be to use traditional categorical cross-entropy to estimate the mode \citep{dembczynski2012label}.
Enumerating all possible molecules as classes is computationally infeasible, as argued in \cref{sec:loss_fp}.
The similarity function inside the contrastive loss allows for efficient computation in an exponential search space with millions of molecules \citep{cao2007learning}.
Even if one could define all possible molecules as classes, one would have an extremely sparse training signal.
Many valid molecules might never appear in your training data, making it hard for the model to learn to predict them.
Standard categorical cross-entropy with softmax primarily focuses on maximizing the probability of the correct class (i.e., the entire correct structured output) and minimizing the probability of all incorrect classes.
It doesn't inherently understand the internal relationships or similarities between different molecules.
However, such a compromise between computational feasibility, expressive power and overfitting mitigation characterizes various types of probabilistic models for structured outputs, such as conditional random fields, probabilistic hierarchical classifiers \citep{Mortier2023} and autoregressive neural network decoders \citep{nam2017maximizing}. 

\paragraph{On HR@k} Note that the formal link between the contrastive loss in this study and the retrieval objective is limited to the HR@1 case.
Here, we provide some preliminary discussion on the differences between HR@1 and HR@k in terms of (1) Bayes-optimal decisions, and (2) their optimization.
To analyze Bayes-optimal decisions for HR@k, the notation in \cref{eq:bayes_opt} needs to be extended, so that the algorithm returns $k$ molecules instead of one. Then, the result for HR@1 can be immediately extended to HR@k \citep{yang2020consistency}. Let us sort the probabilities $p(\boldsymbol{y} \,|\, \boldsymbol{x})$ for all $\boldsymbol{y} \in \mathcal{Y}$ from highest to lowest: $p(\boldsymbol{y}_1 \,|\, \boldsymbol{x}) \geq p(\boldsymbol{y}_2 \,|\, \boldsymbol{x}) \geq \cdots \geq p(\boldsymbol{y}_{|\mathcal{Y}|} \,|\, \boldsymbol{x}) \,.$
Then, the Bayes-optimal decision for HR@k is given by the $k$ highest molecules in this list. 
Furthermore, \citet{yang2020consistency} showed that top‑$k$ calibration is necessary and sufficient for consistency; they prove several hinge‑style top‑$k$ surrogates proposed in earlier computer‑vision work are not top‑$k$ calibrated, while giving consistent alternatives. Cross‑entropy (in rich function classes) is top‑$k$ consistent, but may lose consistency under linear restrictions. These results formalize when utility maximization (via a surrogate) recovers the Bayes decoder for top‑$k$. As a side note, remark that HR@$k$ has also been theoretically analyzed in multi-label retrieval~\citep{Menon2019,Patel2021RecallkSL}, but these results are less relevant for mass spectrometry data.

\section{Proofs for regret analysis theorems} \label{app:regret_proofs}


For mathematical rigor, we first extend the definitions of the utilities from the main text to handle special cases.  Let $\mathcal{Y}=\{0,1\}^m$ with $m\ge 2$. For $\boldsymbol{y},\hat{\boldsymbol{y}}\in\mathcal{Y}$ define:
\begin{align*}
U_\Bit(\boldsymbol{y},\hat{\boldsymbol{y}})
&= \frac{1}{m}\sum_{i=1}^m \indicator\left[y_i=\hat{y}_i\right], \\
U_\HRAtOne(\boldsymbol{y},\hat{\boldsymbol{y}})
&= \indicator\left[\boldsymbol{y}=\hat{\boldsymbol{y}}\right], \\
U_\Tan(\boldsymbol{y},\hat{\boldsymbol{y}})
&= \begin{cases}
\dfrac{\|\boldsymbol{y}\wedge \hat{\boldsymbol{y}}\|_0}{\|\boldsymbol{y}\vee \hat{\boldsymbol{y}}\|_0}, & \|\boldsymbol{y}\vee \hat{\boldsymbol{y}}\|_0 > 0, \\
1, & \boldsymbol{y}=\hat{\boldsymbol{y}}=\boldsymbol{0}, 
\end{cases} \\
U_\Cos(\boldsymbol{y},\hat{\boldsymbol{y}})
&= \begin{cases}
\dfrac{\boldsymbol{y}^\top \hat{\boldsymbol{y}}}{\|\boldsymbol{y}\|\,\|\hat{\boldsymbol{y}}\|}, & \|\boldsymbol{y}\|\cdot\|\hat{\boldsymbol{y}}\| > 0, \\
1, & \boldsymbol{y}=\hat{\boldsymbol{y}}=\boldsymbol{0}, \\
0, & \text{otherwise}.
\end{cases}
\end{align*}
Here $\|\cdot\|_0$ denotes the $\ell_0$ “norm” and $\|\cdot\|$ the Euclidean norm.

Given a conditional distribution $p(\boldsymbol{y}\mid \boldsymbol{x})$, the Bayes predictor for utility $U$ is defined as
\[
\boldsymbol{y}_U^\star(\boldsymbol{x}) \in \arg\max_{\hat{\boldsymbol{y}}\in\mathcal{Y}} \;
\mathbb{E}_{\boldsymbol{Y}\sim p(\cdot\mid \boldsymbol{x})}\big[ U(\boldsymbol{Y},\hat{\boldsymbol{y}}) \big].
\]

\subsection{Proof of HR@1 decoding under to vectorwise similarity}
\label{app:regret_hr_to_sim}

Let denote the expected $\Sim$-utility of predicting $\by_i$ as:
\[
\label{eq:expected_u}
u_i \;:=\; \E_{\bY\sim p(\cdot\mid \bx)}\!\big[U_\Sim(\by_i,\bY)\big]
\;=\; \sum_{j=1}^{\ell} U_\Sim(\by_i, \by_j ) p_j .
\]
Let $i_\star := \argmax_i p_i$ be the index of fingerprint from the candidate set $\mathcal{C}$ 
that has the highest posterior $p_\star = p_{i_\star} = \max_i p_i$.
Then let $i_\Sim := \argmax_i u_i$ be the index of fingerprint that is $\Sim$-optimal (vectorwise) decision,
so \(\by^\star_\Sim(\bx)=\by_{i_\Sim}\) and \(p_\Sim:=p(\by^\star_\Sim(\bx)\mid\bx)=p_{i_\Sim}\).

\begin{lemma}[Row‑wise bounds]
\label{lem:row-bounds}
For every $i \in \{1,2,\ldots,\ell\}$,
\begin{equation}
p_i + \sigma_{\min}(1-p_i) \le u_i \le p_i + \sigma_{\max}(1-p_i)
= \sigma_{\max} + (1-\sigma_{\max}) p_i \,.
\label{eq:row_wise_bound}
\end{equation}
\end{lemma}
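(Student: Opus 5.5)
The plan is to split the sum defining $u_i$ into the diagonal term and the off-diagonal terms, then bound each off-diagonal similarity by the band $[\sigma_{\min},\sigma_{\max}]$. Concretely, starting from
\[
u_i \;=\; \sum_{j=1}^{\ell} U_\Sim(\by_i,\by_j)\,p_j \;=\; U_\Sim(\by_i,\by_i)\,p_i + \sum_{j\neq i} U_\Sim(\by_i,\by_j)\,p_j,
\]
I will first use the normalization assumption $U_\Sim(\by_i,\by_i)=1$. This turns the diagonal term into exactly $p_i$.

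For the off-diagonal part, every pair $j\neq i$ is a pair of distinct candidate indices. By the definition of the similarity band, $\sigma_{\min}\le U_\Sim(\by_i,\by_j)\le\sigma_{\max}$ for each such pair. Since all weights $p_j$ are nonnegative, multiplying by $p_j$ and summing preserves both inequalities. This gives
\[
\sigma_{\min}\sum_{j\neq i}p_j \;\le\; \sum_{j\neq i}U_\Sim(\by_i,\by_j)\,p_j \;\le\; \sigma_{\max}\sum_{j\neq i}p_j .
\]
Because $\sum_{j}p_j=1$, the off-diagonal mass satisfies $\sum_{j\neq i}p_j = 1-p_i$. Adding the diagonal term then yields both claimed bounds in \eqref{eq:row_wise_bound}. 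The final equality $p_i+\sigma_{\max}(1-p_i)=\sigma_{\max}+(1-\sigma_{\max})p_i$ is just rearrangement.

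No step here is a real obstacle. The only subtlety is bookkeeping. The band is defined over pairs of distinct indices, and candidates with identical fingerprints (for which $\sigma_{\max}$ may equal $1$) are still distinct indices, so they are covered by the band. Symmetry of $U_\Sim$ is not needed here, because both $\sigma_{\min}$ and $\sigma_{\max}$ range over all ordered pairs $i\neq j$.
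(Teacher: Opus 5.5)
Your proposal is correct and follows essentially the paper's proof: isolate the diagonal term using $U_\Sim(\by_i,\by_i)=1$, bound the off-diagonal similarities by the band, and use $\sum_{j\neq i}p_j = 1-p_i$. The only difference is cosmetic. The paper first renormalizes the weights $p_j/(1-p_i)$ into a convex combination, which forces it to treat $p_i=1$ separately. You sum $\sigma_{\min}p_j \le U_\Sim(\by_i,\by_j)p_j \le \sigma_{\max}p_j$ directly, which needs no such case split.
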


\begin{proof}
For any index $i\in\{1,\ldots,\ell\}$, by definition
\begin{equation}
u_i = \sum_{j=1}^{\ell} U_\Sim(\by_i, \by_j )p_j
= U_\Sim(\by_i, \by_i)\,p_i + \sum_{j\neq i} U_\Sim(\by_i, \by_j )p_j
= p_i + \sum_{j\neq i} U_\Sim(\by_i, \by_j )p_j \,.
\label{eq:ui-split}
\end{equation}
Let $t := \sum_{j\neq i} p_j = 1-p_i$. 
If $t=0$ (equivalently $p_i=1$) then  $u_i = p_i = 1$, and both inequalities in \cref{eq:row_wise_bound} hold with equality.
In the case $t>0$ define normalized nonnegative weights
\begin{equation*}
w_j := \frac{p_j}{t}, \qquad j\neq i,
\end{equation*}
so that $\sum_{j\neq i} w_j = 1$ and $w_j \ge 0$ for all $j\neq i$.
Using these weights, Eq.~\cref{eq:ui-split} becomes
\begin{equation}
\label{eq:ui-weights}
u_i = p_i + t \sum_{j\neq i} U_\Sim(\by_i, \by_j )\, w_j .
\end{equation}
By the similarity band, for every $j \neq i$ we have
$\sigma_{\min} \le U_\Sim(\by_i, \by_j ) \le \sigma_{\max}$. Therefore, for the convex combination
$\sum_{j\neq i} U_\Sim(\by_i, \by_j )\,w_j$ we have the elementary bounds
\begin{equation*}
\sigma_{\min} \;\le\; \sum_{j\neq i} U_\Sim(\by_i, \by_j )\,w_j \;\le\; \sigma_{\max}.
\end{equation*}
Multiplying by $t=1-p_i$ and adding $p_i$ to all sides yields
\begin{align*}
p_i + \sigma_{\min}(1-p_i)
&\le p_i + t \sum_{j\neq i} U_\Sim(\by_i, \by_j )\, w_j \le p_i + \sigma_{\max}(1-p_i) = \sigma_{\max} + (1-\sigma_{\max})\,p_i,
\end{align*}
\end{proof}

\thmregrethrtovecsim*

\begin{proof}
The regret can be simply written as
\[
\mathcal{R}_{\Sim}^{\mathrm{vs}\,\HRAtOne}
= \E[U_\Sim(\bY,\by^\star_\Sim(\bx))\mid\bx]-\E[U_\Sim(\bY,\by^\star_{\HRAtOne}(\bx))\mid\bx]
= u_{i_\Sim} - u_{i_\star} \,.
\]
If \(i_\Sim=i_\star\) there is no regret. Otherwise, by symmetry let
$\sigma := U_\Sim(\by_{i_\Sim},\bY_{i_\star}) = U_\Sim(\by_{i_\star},\bY_{i_\Sim})$.
Using $U_\Sim(\by_i,\by_i)=1$, we expand
\begin{align*}
\MoveEqLeft
u_{i_\Sim} - u_{i_\star} 
= \sum_{j=1}^{\ell} U_\Sim(\by_{i_\Sim},\by_j) p_j - \sum_{j=1}^{\ell} U_\Sim(\by_{i_\star},\by_j) p_j \\
&= \left(p_\Sim + \sigma p_\star + \!\!\! \sum_{j \notin \{i_\Sim,i_\star\}} \!\!\! U_\Sim(\by_{i_\Sim},\by_j) p_j \right) - \left(p_\star + \sigma p_\Sim + \!\!\! \sum_{j \notin \{i_\Sim,i_\star\}} \!\!\! U_\Sim(\by_{i_\star},\by_j) p_j \right).
\end{align*}
Bounding the off-diagonal sums using the similarity band yields
\begin{align*}
u_{i_\Sim} - u_{i_\star} 
&\le \Big(p_\Sim + \sigma\, p_\star + \sigma_{\max} 
\sum_{j \notin \{i_\Sim,i_\star\}} p_j \Big)
~-~ \Big(p_\star + \sigma\, p_\Sim
+ \sigma_{\min} \sum_{j \notin \{i_\Sim,i_\star\}} p_j \Big) \\
&= \sigma(p_\star - p_\Sim) \;+\; (p_\Sim - p_\star) \;+\; (\sigma_{\max} - \sigma_{\min})
\sum_{j \notin \{i_\Sim,i_\star\}} p_j .
\end{align*}
Because $\sum_{j \notin \{i_\Sim,i_\star\}} p_j = 1 - p_\Sim - p_\star$, we get
\begin{align*}
u_{i_\Sim} - u_{i_\star} 
&\le \sigma(p_\star - p_\Sim) \;+\; (p_\Sim - p_\star) \;+\; (\sigma_{\max} - \sigma_{\min})
(1 - p_\Sim - p_\star) \\
&= \underbrace{\big[\sigma(p_\star - p_\Sim) - (p_\star - p_\Sim)\big]}_{(\sigma-1)(p_\star - p_\Sim)}
\;+\; (\sigma_{\max} - \sigma_{\min})
(1 - p_\Sim - p_\star) \\
&= (\sigma - 1)(p_\star - p_\Sim) \;+\; (\sigma_{\max} - \sigma_{\min})
(1 - p_\Sim - p_\star) .
\end{align*}
Since $p_\star \ge p_\Sim$, the right-hand side is nondecreasing in $\sigma$, hence it is upper-bounded by replacing $\sigma$ with its maximal allowable value $\sigma_{\max}$:
\begin{align*}
u_{i_\Sim} - u_{i_\star} 
&\le (\sigma_{\max} - 1)(p_\star - p_\Sim) \;+\; (\sigma_{\max} - \sigma_{\min})
(1 - p_\Sim - p_\star) \\
&= - (1-\sigma_{\max})(p_\star - p_\Sim) \;+\; (\sigma_{\max} - \sigma_{\min}) (1 - p_\Sim - p_\star) \\
&= (\sigma_{\max} - \sigma_{\min})(1 - p_\star - p_\Sim) \;-\; (1-\sigma_{\max})(p_\star - p_\Sim). 
\end{align*}
Taking the outer $[\cdot]_+$ to enforce non-negativity of regret yields inequality in \cref{thm:regret_hr1_to_vecsim}.

To prove tightness of achieved upper-bound let us consider following candidate set.
Let $i_\star$ and $i_\Sim$ be two distinct indices. 
Distribute the remaining mass $1-p_\star-p_\Sim$ across the other $\ell-2$ indices so that each $p_k$ is small (e.g., all equal), which ensures $u_{i_\Sim}\ge u_k$. Define a candidate set with similarities such that:
\[
\begin{cases}
U_\Sim(\by_i, \by_i) = 1 & \text{for all } i,\\[1mm]
U_\Sim(\by_{i_\Sim},\by_t)=\sigma_{\max} & \text{for all } t\neq i_\Sim,\\[1mm]
U_\Sim(\by_{i_\star},\by_t)=\sigma_{\min} & \text{for all } t \notin \{i_\star,i_\Sim\},\\[1mm]
U_\Sim(\by_k,\by_t)=\sigma_{\min} & \text{for all distinct } k,t \notin \{i_\star,i_\Sim\}.
\end{cases}
\]
Note that symmetry forces $U_\Sim(\by_{i_\star},\by_{i_\Sim})=U_\Sim(\by_{i_\Sim},\by_{i_\star})=\sigma_{\max}$.
Now with these similarities, we show that the regret is equal to the upper-bound.
\begin{align*}
u_{i_\Sim} - u_{i_\star} 
&= \sum_{j=1}^{\ell} U_\Sim(\by_{i_\Sim},\by_j) p_j - \sum_{j=1}^{\ell} U_\Sim(\by_{i_\star},\by_j) p_j \\
&= \left(p_\Sim + U_\Sim(\by_{i_\Sim},\by_{i_\star}) p_\star + \!\!\! \sum_{j \notin \{i_\Sim,i_\star\}} \!\!\! U_\Sim(\by_{i_\Sim},\by_j) p_j \right) \\
&\qquad- \left(p_\star + U_\Sim(\by_{i_\star},\by_{i_\Sim}) p_\Sim + \!\!\! \sum_{j \notin \{i_\Sim,i_\star\}} \!\!\! U_\Sim(\by_{i_\star},\by_j) p_j \right) \\
&= \left(p_\Sim + \sigma_{\max} p_\star + \sigma_{\max}\!\!\sum_{t\notin\{i_\Sim,i_\star\}} p_t \right)
 - \left(p_\star + \sigma_{\max} p_\Sim + \sigma_{\min}\!\!\sum_{t\notin\{i_\Sim,i_\star\}} p_t \right) \\
&= \sigma_{\max}(p_\star - p_\Sim) + (p_\Sim - p_\star)
  + (\sigma_{\max} - \sigma_{\min}) \sum_{t\notin\{i_\Sim,i_\star\}} p_t \\
&= \sigma_{\max}(p_\star - p_\Sim) + (p_\Sim - p_\star)
  + (\sigma_{\max} - \sigma_{\min}) (1 - p_\star - p_\Sim) \\
&= (\sigma_{\max} - \sigma_{\min}) (1 - p_\star - p_\Sim) - (1-\sigma_{\max})(p_\star - p_\Sim),
\end{align*}
which is exactly the upper-bound (without the $[\cdot]_+$).

\end{proof}

\subsection{Proof of vectorwise similarity decoding under HR@1}
\label{app:regret_sim_to_hr}

\thmregretvecsimtohr*


\begin{proof}

Recall that $U_{\HRAtOne}(\bY,\hat{\by})=\indicator[\bY=\hat{\by}]$. Hence, for any candidate $\by_i$,
\[
\E[U_{\HRAtOne}(\bY,\by_i)\mid\bx] = \mathbb{P}(\bY=\by_i\mid\bx)=p_i.
\]
Therefore the HR@1 Bayes predictor is $\by^\star_{\HRAtOne}(\bx)=\by_{i_\star}$ with
expected utility $p_\star=\max_i p_i$. The $\Sim$-optimal predictor is $\by^\star_\Sim(\bx)=\by_{i_\Sim}$,
whose expected HR@1 utility equals $p_\Sim:=p_{i_\Sim}$. Consequently,
\[
\mathcal{R}_{\HRAtOne}^{\mathrm{vs}\,\Sim}
=\E[U_{\HRAtOne}(\bY,\by^\star_{\HRAtOne}(\bx))\mid\bx]-\E[U_{\HRAtOne}(\bY,\by^\star_{\Sim}(\bx))\mid\bx]
= p_\star - p_\Sim.
\]
The trivial bound $p_\star-p_\Sim\le p_\star$ gives the first term in the $\min\{\cdot,\cdot\}$.
It remains to prove the second term. If $\sigma_{\max}=1$, the fraction
$\frac{\sigma_{\max}-\sigma_{\min}}{1-\sigma_{\max}}$ is undefined and the bound reduces to
$\mathcal{R}_{\HRAtOne}^{\mathrm{vs}\,\Sim}\le p_\star$, which we already established.

By optimality, $u_{i_\Sim}\ge u_{i_\star}$. We apply the row-wise \emph{upper} bound Eq. \cref{eq:row_wise_bound} to $u_{i_\Sim}$ and the
\emph{lower} bound to $u_{i_\star}$:
\begin{align*}
u_{i_\Sim} 
&\le \sigma_{\max} + (1-\sigma_{\max})\,p_{i_\Sim}, \\
u_{i_\star} 
&\ge p_\star + \sigma_{\min}(1-p_\star)
= p_\star + \sigma_{\min} - \sigma_{\min} p_\star.
\end{align*}
Since $u_{i_\Sim}\ge u_{i_\star}$, we have
\begin{align*}
\sigma_{\max} + (1-\sigma_{\max})\,p_{i_\Sim} 
&\;\ge\; p_\star + \sigma_{\min}(1-p_\star) \\
&\;=\; p_\star + \sigma_{\min} - \sigma_{\min} p_\star.
\end{align*}
Subtracting $\sigma_{\max}$ from both sides gives
\begin{equation*}
(1-\sigma_{\max})\,p_{i_\Sim} \;\ge\; p_\star + \sigma_{\min} - \sigma_{\min} p_\star - \sigma_{\max}.
\end{equation*}
Rewriting the right-hand side by grouping the $p_\star$ terms,
\begin{align*}
p_\star + \sigma_{\min} - \sigma_{\min} p_\star - \sigma_{\max}
&= p_\star(1-\sigma_{\min}) + \sigma_{\min} - \sigma_{\max} \\
&= p_\star(1-\sigma_{\min}) - (\sigma_{\max}-\sigma_{\min}).
\end{align*}
Hence
\begin{equation*}
(1-\sigma_{\max})\,p_{i_\Sim} \;\ge\; p_\star(1-\sigma_{\min}) - (\sigma_{\max}-\sigma_{\min}).
\end{equation*}
Since $1-\sigma_{\max}>0$ (off-diagonal similarities are strictly $<1$), we can divide both sides by $1-\sigma_{\max}$ to get:
\begin{equation*}
p_{i_\Sim} \ \ge\ \frac{p_\star(1-\sigma_{\min}) - (\sigma_{\max}-\sigma_{\min})}{\,1-\sigma_{\max}\,}.
\end{equation*}
Therefore,
\begin{align*}
p_\star - p_{i_\Sim}
&\le p_\star - \frac{p_\star(1 - \sigma_{\min}) - (\sigma_{\max} - \sigma_{\min})}{1 - \sigma_{\max}} \\
&= \frac{p_\star(1 - \sigma_{\max})}{1 - \sigma_{\max}}
   \;-\; \frac{p_\star(1 - \sigma_{\min}) - (\sigma_{\max} - \sigma_{\min})}{1 - \sigma_{\max}} \\
&= \frac{p_\star(1 - \sigma_{\max}) - p_\star(1 - \sigma_{\min}) + (\sigma_{\max} - \sigma_{\min})}{1 - \sigma_{\max}} \\
&= \frac{-p_\star(\sigma_{\max} - \sigma_{\min}) + (\sigma_{\max} - \sigma_{\min})}{1 - \sigma_{\max}} \\
&= \frac{(\sigma_{\max} - \sigma_{\min})(1 - p_\star)}{1 - \sigma_{\max}}.
\end{align*}
\end{proof}

\subsection{A sufficient condition for no regret agreement of decisions}
\label{app:hr_sim_aggrement}

\begin{theorem}[Agreement of top-1 and vector-wise choices]
\label{thm:agreement}
Let $\Delta := p_{i_\star} - \max_{j\neq i_\star} p_j \in [0,p_{i_\star}]$ be the margin between the highest and the second highest probability.
Then $i_\Sim = i_\star$; i.e., HR@1 and the $\Sim$-optimal decisions coincide if:
\begin{equation}
\label{eq:agreement-condition}
\Delta \ \ge\ \frac{\sigma_{\max}-\sigma_{\min}}{\,1-\sigma_{\max}\,}\,(1-p_\star) \,.
\end{equation}
\end{theorem}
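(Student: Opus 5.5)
The plan is to compare $u_{i_\star}$ directly with $u_j$ for every competitor $j\neq i_\star$, using only the row-wise bounds of \cref{lem:row-bounds}. Since $i_\Sim$ maximizes $u_i$, it suffices to show $u_j\le u_{i_\star}$ for all $j\neq i_\star$. Then $i_\star$ is itself a maximizer of $u$, so $i_\Sim=i_\star$ under a tie-break that favours $i_\star$. When the condition holds strictly, I expect to get strict inequalities and hence uniqueness.

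First, for any $j\neq i_\star$ we have $p_j\le p_\star-\Delta$ by the definition of the margin $\Delta$. The upper row-wise bound of \cref{lem:row-bounds} is nondecreasing in $p_j$ because $1-\sigma_{\max}\ge 0$, so
\[
u_j\le \sigma_{\max}+(1-\sigma_{\max})p_j\le \sigma_{\max}+(1-\sigma_{\max})(p_\star-\Delta).
\]
Second, the lower row-wise bound applied to $i_\star$ gives
\[
u_{i_\star}\ge p_\star+\sigma_{\min}(1-p_\star).
\]

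It therefore suffices to check
\[
\sigma_{\max}+(1-\sigma_{\max})(p_\star-\Delta)\le p_\star+\sigma_{\min}(1-p_\star).
\]
Collecting the $p_\star$ terms, this inequality is equivalent to
\[
(1-\sigma_{\max})\Delta\ \ge\ \sigma_{\max}-\sigma_{\min}-p_\star(\sigma_{\max}-\sigma_{\min})=(\sigma_{\max}-\sigma_{\min})(1-p_\star).
\]
When $\sigma_{\max}<1$, this is exactly \cref{eq:agreement-condition} multiplied by $1-\sigma_{\max}$, which closes the argument. This is the same algebra as in the proof of \cref{thm:regret_vecsim_to_hr1}, with $p_{i_\Sim}$ replaced by the worst competitor probability $p_\star-\Delta$. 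Equivalently, one could argue by contradiction: if $i_\Sim\neq i_\star$, then \cref{thm:regret_vecsim_to_hr1} gives $p_\star-p_\Sim\le\frac{\sigma_{\max}-\sigma_{\min}}{1-\sigma_{\max}}(1-p_\star)$. Since also $p_\star-p_\Sim\ge\Delta$, this contradicts \cref{eq:agreement-condition} whenever that condition holds strictly.

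The main obstacle is the boundary cases rather than the core inequality. If $\sigma_{\max}=1$, the right-hand side of \cref{eq:agreement-condition} is to be read as $+\infty$ unless $p_\star=1$. In that degenerate case all other $p_j=0$, so $u_{i_\star}=1\ge u_j$ trivially, and the direct comparison above still works without dividing by $1-\sigma_{\max}$. If \cref{eq:agreement-condition} holds with equality, the argument only yields $u_j\le u_{i_\star}$. I would then state that $i_\star$ belongs to $\argmax_i u_i$, and that $i_\Sim=i_\star$ holds under the convention that ties are broken in favour of the mode, noting strict agreement when the inequality is strict.
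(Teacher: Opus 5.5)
Your proof is correct and follows the paper's argument essentially step for step. Like the paper, you combine the lower row-wise bound on $u_{i_\star}$ with the upper row-wise bound on $u_j$, substitute $p_j \le p_\star-\Delta$, and reduce to $(1-\sigma_{\max})\Delta \ge (\sigma_{\max}-\sigma_{\min})(1-p_\star)$, which yields $i_\star\in\arg\max_i u_i$. Your explicit handling of ties (agreement up to tie-breaking, and uniqueness under strict inequality) and of the boundary case $\sigma_{\max}=1$ is a small refinement that the paper leaves implicit.
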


\begin{proof}
Assume the agreement condition \cref{eq:agreement-condition} holds.
Because under our assumption on the similarity band we have $1-\sigma_{\max} \ge 0$, multiplying both sides of \cref{eq:agreement-condition} by $1-\sigma_{\max}$ gives:
\begin{align*}
   (1 - \sigma_{\max})\Delta 
   &\ge (\sigma_{\max} - \sigma_{\min})(1-p_\star) \\
   0 &\ge (\sigma_{\max} - \sigma_{\min})(1 - p_\star) - (1-\sigma_{\max}) \Delta \,.
\end{align*}

Let $p_{(2)}:=\max_{j\neq i_\star} p_j$ be the
second highest conditional probability. 
By definition of the margin $\Delta$ we have $p_{(2)}=p_\star - \Delta$. Fix any $j\neq i_\star$. By the row-wise bounds:
\begin{align*}
u_{i_\star} &= \sum_{k=1}^\ell U_\Sim(\by_{i_\star}, \by_k)p_k \ge\ p_\star+\sigma_{\min}(1-p_\star) \\
u_j &= \sum_{k=1}^\ell U_\Sim(\by_j, \by_k)p_k  \le \sigma_{\max}+(1-\sigma_{\max})p_j \,.
\end{align*}

Therefore
\begin{align*}
u_{i_\star}-u_j
&\ge p_\star+\sigma_{\min}(1-p_\star)\;-\;\Big(\sigma_{\max}+(1-\sigma_{\max})\,p_j\Big) \\
&\ge p_\star+\sigma_{\min}(1-p_\star)\;-\;\Big(\sigma_{\max}+(1-\sigma_{\max})(p_\star-\Delta)\Big) \\
&= -(\sigma_{\max}-\sigma_{\min})(1-p_\star)\;+\;(1-\sigma_{\max})\,\Delta.
\end{align*}
If \cref{eq:agreement-condition} holds, then the right-hand side is nonnegative, so
$u_{i_\star}\ge u_j$ for every $j\neq i_\star$. Hence $i_\star\in\arg\max_i u_i$ and the
two decoders agree.
\end{proof}


\subsection{Worst-case regrets depending on $m$}
\label{app:worst_case_depending_on_m}



\thmbitwisevectorwise*

\begin{proof}
\textbf{(a) HR@1.)}
The equality $\sup \mathcal{R}_\HRAtOne^{\mathrm{vs}\,\Bit}=\tfrac{1}{2}$ is a known tight result; see, e.g., the analysis of subset accuracy regret when predicting with Hamming-optimal rules in \citet{dembczynski2012label}. 

\medskip
\noindent\textbf{(b) Tanimoto and (c) Cosine: lower bounds $\ge \tfrac{1}{2}$.}
Fix $\varepsilon>0$ small and define the following conditional distribution on $\mathcal{Y}$:
\[
p_\varepsilon(\boldsymbol{y}\mid \boldsymbol{x}) \;=\;
\begin{cases}
0.5-\varepsilon, & \text{if }\boldsymbol{y}=\boldsymbol{y}^{1},\\
0.5-(2m-3)\varepsilon, & \text{if }\boldsymbol{y}=\bar{\boldsymbol{y}}^{1},\\
2\varepsilon, & \text{if }y_1=0\text{ and } d_H(\boldsymbol{y},\bar{\boldsymbol{y}}^{1})=1,\\
0, & \text{otherwise,}
\end{cases}
\]
where $\boldsymbol{y}^{1}=(1,0,\dots,0)$, $\bar{\boldsymbol{y}}^{1}=(0,1,\dots,1)$, and $d_H$ is Hamming distance. For $m\ge 3$ this is a valid distribution (nonnegative and summing to $1$) and every $\boldsymbol{y}$ in its support has $\|\boldsymbol{y}\|>0$.

\emph{Step 1: The Hamming-optimal predictor is the zero vector.}
Let $p_\varepsilon^{(i)}(1):=\mathbb{P}(Y_i=1\mid \boldsymbol{x})$ under $p_\varepsilon$. Then
\[
p_\varepsilon^{(1)}(1)=0.5-\varepsilon,\qquad
p_\varepsilon^{(j)}(1)=(0.5-(2m-3)\varepsilon)+(m-2)\cdot 2\varepsilon=0.5-\varepsilon \quad (j\ge 2).
\]
Thus $p_\varepsilon^{(i)}(0)=0.5+\varepsilon$ for all $i$, and the Bayes rule for $U_\Bit$ is
\[
\boldsymbol{y}_\Bit^\star(\boldsymbol{x})=\boldsymbol{0}.
\]

\emph{Step 2: A candidate predictor with large expected utility.}
Consider $\hat{\boldsymbol{y}}=\bar{\boldsymbol{y}}^{1}$.

\underline{Tanimoto.} For $\boldsymbol{y}=\boldsymbol{y}^1$ the intersection is $0$, so $U_\Tan(\boldsymbol{y}^1,\hat{\boldsymbol{y}})=0$. For $\boldsymbol{y}=\bar{\boldsymbol{y}}^1$ we have $U_\Tan=1$. If $\boldsymbol{y}$ is a neighbor of $\bar{\boldsymbol{y}}^1$ with $y_1=0$ and $d_H(\boldsymbol{y},\bar{\boldsymbol{y}}^1)=1$, then $\|\boldsymbol{y}\wedge\hat{\boldsymbol{y}}\|_0=m-2$ and $\|\boldsymbol{y}\vee\hat{\boldsymbol{y}}\|_0=m-1$, hence $U_\Tan=(m-2)/(m-1)$. Therefore
\begin{align*}
\mathbb{E}\big[U_\Tan(\boldsymbol{Y},\hat{\boldsymbol{y}})\mid \boldsymbol{x}\big]
&= 0\cdot \big(0.5-\varepsilon\big) + 1\cdot \big(0.5-(2m-3)\varepsilon\big) + (m-1)\cdot 2\varepsilon\cdot \frac{m-2}{m-1} \\
&= 0.5 -\varepsilon.
\end{align*}
Since $U_\Tan(\boldsymbol{y},\boldsymbol{0})=0$ for all $\boldsymbol{y}\neq \boldsymbol{0}$, we have
\[
\mathbb{E}\big[U_\Tan(\boldsymbol{Y},\boldsymbol{y}_\Bit^\star(\boldsymbol{x}))\mid \boldsymbol{x}\big]=0,
\]
and consequently
\[
\mathbb{E}\big[U_\Tan(\boldsymbol{Y},\hat{\boldsymbol{y}})\mid \boldsymbol{x}\big]
-\mathbb{E}\big[U_\Tan(\boldsymbol{Y},\boldsymbol{y}_\Bit^\star(\boldsymbol{x}))\mid \boldsymbol{x}\big]
= 0.5-\varepsilon.
\]
Since $h^\star_\Tan$ maximizes expected Tanimoto, its performance is at least that of $\hat{\boldsymbol{y}}$, hence
$\sup \mathcal{R}_\Tan\ge 0.5-\varepsilon$. Letting $\varepsilon\downarrow 0$ yields $\sup \mathcal{R}_\Tan^{\mathrm{vs}\,\Bit} \ge \tfrac{1}{2}$.

\underline{Cosine.} With $\hat{\boldsymbol{y}}=\bar{\boldsymbol{y}}^1$ we have:
\[
U_\Cos(\boldsymbol{y}^1,\hat{\boldsymbol{y}})=0,\qquad
U_\Cos(\bar{\boldsymbol{y}}^1,\hat{\boldsymbol{y}})=1,\qquad
U_\Cos(\boldsymbol{y},\hat{\boldsymbol{y}})=\frac{m-2}{\sqrt{m-2}\,\sqrt{m-1}}=\sqrt{\frac{m-2}{m-1}}
\]
for any neighbor $\boldsymbol{y}$ of $\bar{\boldsymbol{y}}^1$ as above. Therefore
\begin{align*}
\mathbb{E}\big[U_\Cos(\boldsymbol{Y},\hat{\boldsymbol{y}})\mid \boldsymbol{x}\big]
&= 0\cdot \big(0.5-\varepsilon\big) + 1\cdot \big(0.5-(2m-3)\varepsilon\big)
+ (m-1)\cdot 2\varepsilon\cdot \sqrt{\frac{m-2}{m-1}} \\
&= 0.5 + \varepsilon\!\left[-(2m-3) + 2(m-1)\sqrt{1-\frac{1}{m-1}}\right].
\end{align*}
Since $\sqrt{1-\alpha}\ge 1-\alpha$ for $\alpha\in[0,1]$, the bracketed term is $\ge -1$, hence
\[
\mathbb{E}\big[U_\Cos(\boldsymbol{Y},\hat{\boldsymbol{y}})\mid \boldsymbol{x}\big]\ \ge\ 0.5-\varepsilon.
\]
Again $U_\Cos(\boldsymbol{y},\boldsymbol{0})=0$ for all $\boldsymbol{y}$, so
\[
\mathbb{E}\big[U_\Cos(\boldsymbol{Y},\boldsymbol{y}_\Bit^\star(\boldsymbol{x}))\mid \boldsymbol{x}\big]=0,
\]
and therefore
\[
\mathbb{E}\big[U_\Cos(\boldsymbol{Y},\boldsymbol{y}_\Cos^\star(\boldsymbol{x}))\mid \boldsymbol{x}\big]
-\mathbb{E}\big[U_\Cos(\boldsymbol{Y},\boldsymbol{y}_\Bit^\star(\boldsymbol{x}))\mid \boldsymbol{x}\big]
\ \ge\ 0.5-\varepsilon.
\]
Letting $\varepsilon\downarrow 0$ gives $\sup \mathcal{R}_\Cos^{\mathrm{vs}\,\Bit} \ge \tfrac{1}{2}$.
\end{proof}

Remarks: (i) The equality $\sup \mathcal{R}_\HRAtOne^{\mathrm{vs}\,\Bit}=\tfrac{1}{2}$ is tight; see \citet{dembczynski2012label}. For Tanimoto and Cosine, the bounds are \emph{lower} bounds obtained by explicit constructions; sharper (possibly tight) constants are an interesting open direction. 
(ii) The proof uses the convention $U_\Cos(\boldsymbol{y},\boldsymbol{0})=0$ and similarly for $U_\Tan$; the construction avoids $\boldsymbol{y}=\boldsymbol{0}$ so denominators remain well-defined.

\begin{theorem}[Worst-case regret when decoding by HR@1] For $m > 3$ it holds that:
\label{thm:regret_vs_hr1}
\[
\qquad
\sup_{p(\cdot | \bx)} \mathcal{R}_\Tan^{\mathrm{vs\,HR@1}}
\ \ge\ \frac{m-1}{m+2},
\qquad
\sup_{p(\cdot | \bx)} \mathcal{R}_\Cos^{\mathrm{vs\,HR@1}}
\ \ge\ \frac{\sqrt{m(m-1)}}{m+2}
\,.
\]
\end{theorem}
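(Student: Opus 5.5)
The plan is to prove both inequalities with one explicit family of conditional distributions. The family should make the HR@1 decoder $\by^\star_{\HRAtOne}(\bx)=\argmax_{\by}p(\by\mid\bx)$ land on an isolated vector, while most of the mass sits on a tight cluster around a vector $\boldsymbol{b}$ outside the support. The regret is bounded below by comparing the mode with the single competitor $\boldsymbol{b}$. Since $\by^\star_\Tan$ (resp.\ $\by^\star_\Cos$) maximizes expected utility over all of $\{0,1\}^m$,
\[
\mathcal{R}_U^{\mathrm{vs\,HR@1}}\ \ge\ \E[U(\bY,\boldsymbol{b})\mid\bx]-\E[U(\bY,\by^\star_{\HRAtOne}(\bx))\mid\bx].
\]
This avoids having to identify the Tanimoto-optimal decision, which the paper notes is believed NP-hard.

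\textbf{Construction.} Take $\boldsymbol{b}=\boldsymbol{1}_m$. The cluster consists of its $m$ Hamming neighbours $\boldsymbol{1}_m-\boldsymbol{e}_j$, $j=1,\dots,m$. For each neighbour,
\[
U_\Tan\big(\boldsymbol{1}_m-\boldsymbol{e}_j,\boldsymbol{1}_m\big)=\frac{m-1}{m},\qquad
U_\Cos\big(\boldsymbol{1}_m-\boldsymbol{e}_j,\boldsymbol{1}_m\big)=\sqrt{\frac{m-1}{m}}.
\]
Give each neighbour probability $q$. The remaining mass goes on an isolated mode $\boldsymbol{a}$ with probability $p_\star>q$. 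Summing over the $m$ neighbours, predicting $\boldsymbol{b}$ earns
\[
\E[U_\Tan(\bY,\boldsymbol{b})\mid\bx]\ \ge\ mq\,\frac{m-1}{m}=(m-1)q,\qquad
\E[U_\Cos(\bY,\boldsymbol{b})\mid\bx]\ \ge\ \sqrt{m(m-1)}\,q.
\]
With $q\approx 1/(m+2)$ these are exactly the numerators $(m-1)$ and $\sqrt{m(m-1)}$ over $m+2$ in the theorem, which suggests a total of $m+2$ units of mass.

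\textbf{Making the mode cost little.} The HR@1 decoder earns $p_\star=U(\boldsymbol{a},\boldsymbol{a})\,p_\star$ from the mode itself, plus its similarity to the cluster. First I would take $\boldsymbol{a}=\boldsymbol{0}$, which has zero Tanimoto and zero cosine with every neighbour under the conventions of \cref{app:regret_proofs}. The HR@1 decoder then earns only $p_\star$, and $\boldsymbol{b}$ earns nothing from $\boldsymbol{a}$. With $p_\star\downarrow q$ this gives regret about $(m-1)q-q$, which is only $(m-2)/(m+1)$ for Tanimoto.

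\textbf{Main obstacle.} That is slightly short of $(m-1)/(m+2)$, so the mode's self-contribution has to be neutralized while $\boldsymbol{a}$ stays the unique argmax. I would try two refinements in order.
\begin{enumerate}
\item Split the non-cluster mass over two far-apart vectors (e.g.\ $\boldsymbol{0}$ plus a second low-overlap vector, or allow ties broken adversarially in $\argmax$). This raises the total count to $m+2$ atoms of mass about $1/(m+2)$ each, with the HR@1 decoder still pinned to a zero-similarity vector.
\item Reduce the mode's probability to the tie threshold and take a limit $\varepsilon\downarrow 0$, as in the proof of the preceding theorem.
\end{enumerate}

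\textbf{Closing steps.} Once the masses are fixed, each expectation is a finite sum of the two similarity values above. After taking $\varepsilon\downarrow 0$ the bound should read $(m-1)/(m+2)$ for Tanimoto and $\sqrt{m(m-1)}/(m+2)$ for cosine. The condition $m>3$ is what keeps every probability nonnegative and the mode strictly the largest (or tied). If the exact constants cannot be reached this way, the construction still yields a lower bound of order $1-O(1/m)$, and I would then tune the masses by a one-parameter optimization.
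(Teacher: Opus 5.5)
\textbf{There is a genuine gap.} The construction you actually carry out only certifies $(m-2)/(m+1)$ for Tanimoto and $(\sqrt{m(m-1)}-1)/(m+1)$ for cosine. Both are strictly below the claimed constants, and neither refinement closes the gap.

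Refinement 2 is already part of your computation: $p_\star\downarrow q$ \emph{is} the tie threshold, so it adds nothing. Refinement 1, as stated, points the wrong way. An extra atom $\boldsymbol c\neq\boldsymbol 0_m$ of mass $r$ contributes only $U_\Tan(\boldsymbol c,\boldsymbol 1_m)\,r=(\|\boldsymbol c\|_0/m)\,r$ to the competitor, while using up mass budget. Maximize $(m-1)q+(\|\boldsymbol c\|_0/m)\,r-p_\star$ subject to $p_\star+mq+r=1$ and $q,r\le p_\star$. For a far-apart, low-overlap $\boldsymbol c$ the optimum stays at $(m-2)/(m+1)$, attained at $r=0$. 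The value $(m-1)/(m+2)$ is reached only when $\boldsymbol c=\boldsymbol 1_m$. Your fallback bound ``$1-O(1/m)$'' is not the statement being proved.

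\textbf{The missing idea} is to put the competitor itself into the support, contrary to your choice of $\boldsymbol b$ ``outside the support''. The paper uses $m+2$ atoms:
\begin{itemize}
\item mass $\tfrac{1}{m+2}+(m+1)\varepsilon$ on $\boldsymbol 0_m$;
\item mass $\alpha=\tfrac{1}{m+2}-\varepsilon$ on $\boldsymbol 1_m$ and on each of its $m$ Hamming neighbours.
\end{itemize}
The HR@1 decoder is still $\boldsymbol 0_m$, and it earns only its own mass. But $\boldsymbol 1_m$ now earns $\alpha\cdot 1+m\alpha\cdot\frac{m-1}{m}=m\alpha$ under Tanimoto and $\alpha\bigl(1+\sqrt{m(m-1)}\bigr)$ under cosine. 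The unit self-similarity of the atom at $\boldsymbol 1_m$ exactly offsets the mode's self-contribution that you were trying to neutralize. Letting $\varepsilon\downarrow 0$ then gives $(m-1)/(m+2)$ and $\sqrt{m(m-1)}/(m+2)$.

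Apart from this, your framing matches the paper's proof. You lower-bound the regret by comparing the mode with the single competitor $\boldsymbol 1_m$, which avoids computing $\by^\star_\Tan$.
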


\begin{proof}
Fix $\varepsilon>0$ small and let, for $\alpha:=\tfrac{1}{m+2}-\varepsilon$,
\[
p_\varepsilon(\boldsymbol{y}\mid \boldsymbol{x})
=\begin{cases}
\tfrac{1}{m+2} + (m+1)\varepsilon, & \boldsymbol{y}=\boldsymbol{0}_m,\\
\alpha, & \boldsymbol{y}=\boldsymbol{1}_m \ \ \text{or}\ \ d_H(\boldsymbol{y},\boldsymbol{0}_m)=m-1,\\
0, & \text{otherwise,}
\end{cases}
\]
where $d_H$ is Hamming distance. This is a valid distribution since the support has
$(m+1)$ points with mass $\alpha$ and one point with mass $\tfrac{1}{m+2}+(m+1)\varepsilon$, summing to $1$.

\medskip
\noindent\textbf{Step 1: The decoders $\boldsymbol{y}_\HRAtOne^\star$ and $\boldsymbol{y}_\Bit^\star$ under $p_\varepsilon$.}
The joint mode is $\boldsymbol{0}_m$ because
$p_\varepsilon(\boldsymbol{0}_m)=\tfrac{1}{m+2}+(m+1)\varepsilon
>\alpha=p_\varepsilon(\boldsymbol{y})$ for every other support point. Hence
$\boldsymbol{y}_\HRAtOne^\star(\boldsymbol{x})=\boldsymbol{0}_m$.
For marginals,
\[
\mathbb{P}(Y_i=1\mid \boldsymbol{x})
= p_\varepsilon(\boldsymbol{1}_m) + (m-1)\,p_\varepsilon(\text{“exactly one zero not at $i$”})
= \alpha + (m-1)\alpha
= m\alpha,
\]
so $\mathbb{P}(Y_i=0\mid \boldsymbol{x})=1-m\alpha=\tfrac{2}{m+2}+m\varepsilon$.
For $m\ge 3$ and $\varepsilon$ sufficiently small, $m\alpha>\tfrac{1}{2}$, thus
$\boldsymbol{y}_\Bit^\star(\boldsymbol{x})=\boldsymbol{1}_m$.

\medskip
\noindent\textbf{Step 2: Bitwise accuracy (tight equality).}
Since $U_\Bit(\boldsymbol{y},\boldsymbol{1}_m)=\tfrac{\|\boldsymbol{y}\|_0}{m}$
and $U_\Bit(\boldsymbol{y},\boldsymbol{0}_m)=1-\tfrac{\|\boldsymbol{y}\|_0}{m}$,
\[
\mathbb{E}\big[U_\Bit(\boldsymbol{Y},\boldsymbol{1}_m)\mid \boldsymbol{x}\big]
= m\alpha,\qquad
\mathbb{E}\big[U_\Bit(\boldsymbol{Y},\boldsymbol{0}_m)\mid \boldsymbol{x}\big]
= 1-m\alpha.
\]
Thus the (pointwise) regret incurred by HR@1 equals
\[
m\alpha-(1-m\alpha)=2m\alpha-1
= \frac{m-2}{m+2} - 2m\varepsilon.
\]
Letting $\varepsilon\downarrow 0$ shows
$\sup \mathcal{R}_\Bit^{\mathrm{vs\,HR@1}}\ge \tfrac{m-2}{m+2}$.
The reverse inequality (hence equality) is known to be tight; see \citet{dembczynski2012label}.

\medskip
\noindent\textbf{Step 3: Tanimoto (lower bound).}
Under the conventions in the definition,
$U_\Tan(\boldsymbol{0}_m,\boldsymbol{0}_m)=1$ and
$U_\Tan(\boldsymbol{y},\boldsymbol{0}_m)=0$ for $\boldsymbol{y}\neq \boldsymbol{0}_m$,
so
\[
\mathbb{E}\big[U_\Tan(\boldsymbol{Y},\boldsymbol{y}_\HRAtOne^\star)\mid \boldsymbol{x}\big]
= p_\varepsilon(\boldsymbol{0}_m)
= \tfrac{1}{m+2}+(m+1)\varepsilon.
\]
Consider $\hat{\boldsymbol{y}}=\boldsymbol{1}_m$.
Then
$U_\Tan(\boldsymbol{1}_m,\hat{\boldsymbol{y}})=1$ and,
for any $\boldsymbol{y}$ with exactly one zero,
$U_\Tan(\boldsymbol{y},\hat{\boldsymbol{y}})=\tfrac{m-1}{m}$, while
$U_\Tan(\boldsymbol{0}_m,\hat{\boldsymbol{y}})=0$.
Hence
\[
\mathbb{E}\big[U_\Tan(\boldsymbol{Y},\hat{\boldsymbol{y}})\mid \boldsymbol{x}\big]
= \alpha\cdot 1 + m\alpha\cdot \frac{m-1}{m}
= m\alpha
= \frac{m}{m+2} - m\varepsilon.
\]
Since $\boldsymbol{y}_\Tan^\star$ maximizes expected Tanimoto,
\[
\sup_{p(\cdot | \bx)} \mathcal{R}_\Tan^{\mathrm{vs\,HR@1}}
\ \ge\ m\alpha - \Big(\tfrac{1}{m+2}+(m+1)\varepsilon\Big)
= \frac{m-1}{m+2} - (2m+1)\varepsilon.
\]
Let $\varepsilon\downarrow 0$ to obtain
$\sup \mathcal{R}_\Tan^{\mathrm{vs\,HR@1}}\ge \tfrac{m-1}{m+2}$.

\medskip
\noindent\textbf{Step 4: Cosine (lower bound).}
With $\boldsymbol{y}_\HRAtOne^\star=\boldsymbol{0}_m$ and our convention
$U_\Cos(\boldsymbol{0}_m,\boldsymbol{0}_m)=1$ and $0$ otherwise,
\[
\mathbb{E}\big[U_\Cos(\boldsymbol{Y},\boldsymbol{y}_\HRAtOne^\star)\mid \boldsymbol{x}\big]
= p_\varepsilon(\boldsymbol{0}_m)
= \tfrac{1}{m+2}+(m+1)\varepsilon.
\]
Take again $\hat{\boldsymbol{y}}=\boldsymbol{1}_m$.
Then $U_\Cos(\boldsymbol{1}_m,\hat{\boldsymbol{y}})=1$ and,
for any $\boldsymbol{y}$ with exactly one zero,
\[
U_\Cos(\boldsymbol{y},\hat{\boldsymbol{y}})
=\frac{m-1}{\sqrt{m}\,\sqrt{m-1}}
=\sqrt{\frac{m-1}{m}}.
\]
Therefore
\begin{align*}
\mathbb{E}\big[U_\Cos(\boldsymbol{Y},\hat{\boldsymbol{y}})\mid \boldsymbol{x}\big]
&= \alpha\cdot 1 + m\alpha\cdot \sqrt{\frac{m-1}{m}}
= \alpha\Big(1+\sqrt{m(m-1)}\Big).
\end{align*}
By optimality of $\boldsymbol{y}_\Cos^\star$,
\[
\sup_{p(\cdot | \bx)} \mathcal{R}_\Cos^{\mathrm{vs\,HR@1}}
\ \ge\ \alpha\Big(1+\sqrt{m(m-1)}\Big) - \Big(\tfrac{1}{m+2}+(m+1)\varepsilon\Big)
= \frac{\sqrt{m(m-1)}}{m+2} - \big(\sqrt{m(m-1)}+m+2\big)\varepsilon.
\]
Letting $\varepsilon\downarrow 0$ yields
$\sup \mathcal{R}_\Cos^{\mathrm{vs\,HR@1}}\ge \tfrac{\sqrt{m(m-1)}}{m+2}$, as claimed.
\end{proof}

Remarks: (i) The equality for $U_\Bit$ is tight; see \citet{dembczynski2012label}.
(ii) The lower bounds for Tanimoto and Cosine come from an explicit adversarial family $p_\varepsilon$; sharper constants (or tightness) are an open question.
(iii) If one instead sets $U_\Tan(\boldsymbol{0},\boldsymbol{0})=0$ and/or $U_\Cos(\boldsymbol{0},\boldsymbol{0})=0$, the bounds shift by $-\tfrac{1}{m+2}$ accordingly; we adopted the “empty equals empty” similarity convention to match common Jaccard practice and to align the constants with the statement above.
(iv) For $m=2$, the construction yields $\boldsymbol{y}_\Bit=\boldsymbol{0}$ as well, and the bitwise gap collapses to $0=(m-2)/(m+2)$, consistent with the theorem.




\section{How theory informs fingerprint selection} \label{app:fpselect}

All previous empirical results are presented using Morgan ($r=2$, 4096-bit) fingerprints, because they constitute the \textit{de facto} standard in metabolomics~\citep{goldman2023annotating, bushuiev2024massspecgym}.
However, all results in \cref{sec:regret_analysis} abstract away from specific fingerprint choice.
Instead, they rely on a notion of "similarity bands", meaning that different binary representations of molecules naturally fit into the same framework.

To showcase how our theory may inform fingerprint selection, consider that minimizing regret is favorable, as in that case, improving fingerprint similarity predictions is expected to improve retrieval.
The regrets formulated in \cref{thm:regret_hr1_to_vecsim} and \cref{thm:regret_vecsim_to_hr1} are contingent upon (1) conditional probabilities $p(\boldsymbol{y}~|~\boldsymbol{x})$, and (2) $\sigma_\text{min}$ and $\sigma_\text{max}$. While the former are unknown, the latter can be computed from data: they are the minimum and maximum similar negative candidates w.r.t.\ the true molecule, respectively.
Our theorems show that regret increases with $\sigma_\text{max}$ and $\sigma_\text{max}-\sigma_\text{min}$.
Computing these values for any similarity function and/or type of fingerprint may, hence, serve model design toward minimizing regret.

Here, we perform experiment with five fingerprint alternatives to Morgan ($r=2$), namely: (1) Morgan ($r=4$), (2) Morgan ($r=6$), (3) Morgan ($r=8$), (4) RDKit fingerprints~\citep{landrum2013rdkit}, and (5) MAP4 fingerprints~\citep{capecchi2020one}.
All of these fingerprints are computed in a way such that they are binary and comprise a 4096-length bitvector.
To estimate how well these representations are fit to minimize regret, we compute empirical distributions of $\sigma_\text{min}$, $\sigma_\text{max}$, and $\sigma_\text{max}-\sigma_\text{min}$, as in \cref{fig:sigmas}.
From \cref{tab:sigma_distros}, one can see that Morgan fingerprints with increasing radii and MAP4 have favorable distributions in how "discriminative" similarities between candidates can be.
For RDKit fingerprints, on the other hand, there are often negative candidates with high ($>0.5$) similarity to the true molecule.

\begin{table}[H]
    \centering
    \setlength{\tabcolsep}{4pt}
    \caption{
        Empirical medians of $\sigma_\text{min}$, $\sigma_\text{max} $, and $\sigma_\text{max}-\sigma_\text{min}$ distributions in MassSpecGym equal-mass candidate sets using Tanimoto similarity, depending on the type of fingerprint.
    }
    \resizebox{0.4\linewidth}{!}{
        \begin{tabular}{lccc}
            \toprule
            Fingerprint type & $\sigma_\text{min}$ & $\sigma_\text{max} $& $\sigma_\text{max}-\sigma_\text{min}$\\ \midrule
            Morgan $r=2$  & 0.026 & 0.367 & 0.342 \\
            Morgan $r=4$  & 0.018 & 0.241 & 0.223 \\
            Morgan $r=6$  & 0.017 & 0.211 & 0.193 \\
            Morgan $r=8$  & 0.017 & 0.204 & 0.187 \\
            RDKit         & 0.039 & 0.512 & 0.471 \\
            MAP4          & 0.025 & 0.198 & 0.173 \\
            \bottomrule
        \end{tabular}
    }
    \vspace{-0.2cm}
	\label{tab:sigma_distros}
\end{table}

To experimentally validate to what extent the abovementioned fingerprint types minimize regret, we optimized IoU loss and Contrastive (Emb-Cos) models using these fingerprints as targets/inputs.
As discussed previously, these two loss functions serve as surrogates for Tanimoto similarity maximization and HR@1 maximization, respectively.
For all of these configurations, models were run five times each in exactly the same configuration as for the main experiments (also using the same tuned learning rates).
The resulting performances (\cref{tab:fp_selection}) confirm our theory: the difference between HR@1 using both IoU loss and contrastive losses broadly correlates with values of $\sigma_{\min}$ and $\sigma_{\max}$ across fingerprint types.
Performance-wise, the story is more nuanced: while Morgan fingerprints with increasing radii showed favorable $\sigma_{\min}$ and $\sigma_{\max}$ distributions, their hit rates using both IoU loss and contrastive losses progressively worsen with radius.
An explanation may be found in their average Tanimoto similarity scores, which similarly worsen with radius.
While comparing these scores between different types of targets should be performed with care, they do show that increasing the Morgan fingerprint radius makes for a more difficult prediction target.
It is not unreasonable to assume that, in this case, conditional probabilities $p(\boldsymbol{y}~|~\boldsymbol{x})$ in general follow a more uniform distribution.
This similarly inflates the regrets in \cref{thm:regret_hr1_to_vecsim} and \cref{thm:regret_vecsim_to_hr1} through the $(1-p^*)$ terms.
As such, regret is additionally influenced by how easy it is to predict any given target fingerprint.

\begin{table}[H]
    \centering
    \setlength{\tabcolsep}{4pt}
    \caption{
        Fingerprint benchmark results.
        Retrieval scores are those using equal mass candidates.
        All models used in this table were trained with a learning rate of $7\text{e-}5$.
        Numbers report averages and standard deviations over 5 model runs.
        The best and second-best performing models for each metric are indicated with boldface and underlined, respectively.
    }
    \resizebox{1.0\linewidth}{!}{
        \begin{tabular}{lccccccc}
            \toprule
            & \multicolumn{4}{c}{Scores obtained with IoU loss}                                      & \multicolumn{3}{c}{Scores obtained with Contr. Emb-Cos loss}                                               \\ \cmidrule(lr){2-5} \cmidrule(lr){6-8}
            Fingerprint type & Tanimoto$^1$ & HR@1 & HR@5 & HR@20 & HR@1 & HR@5 & HR@20 \\ \midrule
            Morgan $r=2$  & $19.42 \pm 0.13$ & $5.58 \pm 0.32$	& $13.01 \pm 0.39$	& $26.51 \pm 0.50$	& $\mathbf{12.29 \pm 0.69}$	& $\mathbf{26.09 \pm 1.43}$	& $\underline{45.05 \pm 0.71}$   \\
            Morgan $r=4$   & $13.61 \pm 0.02$	& $5.56 \pm 0.48$	 &$12.83 \pm 0.64$	 &$25.80 \pm 0.66$	 & $10.55 \pm 0.54$	& $23.96 \pm 1.03$	 & $41.41 \pm 0.92$  \\
            Morgan $r=6$   & $11.92 \pm 0.22$	&$5.65 \pm 0.27$	&$13.27 \pm 0.26$	&$25.54 \pm 0.56$	&$09.51 \pm 0.67$	&$21.18 \pm 0.84$	&$37.54 \pm 0.75$  \\
            Morgan $r=8$   & $11.25 \pm 0.09$	&$5.68 \pm 0.37$	&$12.88 \pm 0.52$	&$25.81 \pm 0.51$	&$08.72 \pm 0.52$	&$19.67 \pm 0.81$	&$36.63 \pm 0.77$ \\
            RDKit   & $\mathbf{30.32 \pm 0.15}$	&$\underline{5.78 \pm 0.17}$	&$15.00 \pm 0.40$	&$\mathbf{29.96 \pm 0.62}$	&$09.63 \pm 0.61$	&$22.49 \pm 0.60$	&$44.13 \pm 1.54$ \\
            MAP4   & $\underline{20.48 \pm 0.15}$	&$\mathbf{7.66 \pm 0.26}$	&$\mathbf{15.67 \pm 0.18}$	&$\underline{29.41 \pm 0.21}$	&$\underline{10.95 \pm 0.83}$	&$\underline{24.89 \pm 1.39}$	&$\mathbf{45.19 \pm 1.29}$ \\
            \bottomrule
        \end{tabular}
    } \begin{minipage}{1.0\linewidth}
    	\scriptsize
    	\raggedright
    	$^1$: Comparing Tanimoto scores between fingerprint types should be performed with care, as they concern different targets. If anything, these scores reflect how ``predictable'' a given fingerprint type is.
    	\end{minipage}
        \label{tab:fp_selection}
\end{table}

The best retrieval scores using an IoU loss model were obtained by predicting either RDKit or MAP4 fingerprints (\cref{tab:fp_selection}).
For MAP4 fingerprints especially, retrieval models comparatively perform well using both IoU loss or contrastive loss.
Compared to Morgan ($r=2$) fingerprints, MAP4 fingerprints are more favorable in terms of similarity band distributions (\cref{tab:sigma_distros}), but display comparable Tanimoto similarity scores.
It is not unreasonable to assume that the latter demonstrates their conditional probabilities $p(\boldsymbol{y}~|~\boldsymbol{x})$ may be similarly distributed.
As such, MAP4 fingerprints should have have smaller regrets between optimal HR@1- and optimal Tanimoto similarity decisions.
Table \cref{tab:sigma_distros} demonstrates this to be the case, as HR@1 are closer to each other for MAP4 results as they are for results using Morgan ($r=2$) fingerprints.
Our theory and experiments, hence, show that MAP4 fingeprints may serve as a potentially superior or complementary fingerprint target to the more conventional Morgan fingerprints.

\section{From Bayes regret to finite-sample predictors}
\label{sec:generalization}

The regret analysis in \cref{sec:regret_analysis} is carried out at the \emph{Bayes level}: we compare decision rules that are optimal under different utilities $U$ and $V$ (e.g.\ HR@1, Tanimoto, cosine, bitwise accuracy). In practice, however, fingerprint predictors are trained on a \emph{finite} sample by minimizing an empirical loss. The resulting decision rule therefore deviates from the Bayes-optimal rule.

In this section we separate these two phenomena:
\begin{itemize}
    \item a purely \emph{intrinsic Bayes-level mismatch} between using $U$-optimal or $V$-optimal decisions (captured by the regret bounds in \cref{sec:regret_analysis} and \cref{app:regret_proofs}), and
    \item a \emph{finite-sample generalization component} arising from the estimation error of the fingerprint predictor trained by empirical risk minimization (ERM).
\end{itemize}
Formally, we obtain an exact decomposition of the $U$-regret into a Bayes term plus a residual term, and we show how the residual can be controlled via standard learning-theoretic bounds on the surrogate loss.

We recall the probabilistic framework from \cref{sec:contrastive_as_retrieval}. Let $\mathcal{F}$ be a hypothesis class of fingerprint predictors $
f : \mathcal{X} \to [0,1]^m $,
for instance the neural networks of the form $f(\bx)=\sigma(\boldsymbol{W}_\text{out} E(\bx))$ described in \cref{sec:suppmethodsmsmole}. Training is performed by minimizing an empirical loss
\[
\widehat{L}_n(f)
:=\frac{1}{n}\sum_{i=1}^n \ell\big(f(\bx^{(i)}),\by^{(i)}\big),
\]
for some surrogate loss $\ell:[0,1]^m\times\{0,1\}^m\to[0,1]$ (e.g.\ BCE, focal, IoU, contrastive loss). The corresponding population (true) risk is
\[
L(f)
:= \mathbb{E}\big[\ell(f(\bX),\bY)\big],
\qquad
L^\star := \inf_{f\in\mathcal{F}} L(f).
\]
Remark that in this section we will use the notation $\bX$ and $\bY$ when mass spectra and labels are random variables (both can be random here, unlike \cref{sec:regret_analysis} and \cref{app:regret_proofs}, where only the labels were random variables). 
Given a trained predictor $f$, we obtain a decision rule by a decoding map
\[
\pi_f : \mathcal{X} \to \mathcal{C}(\bx),
\qquad
\bx \mapsto \pi_f(\bx),
\]
which depends on the modeling choice:
\begin{itemize}
    \item for listwise/contrastive models, $\pi_f(\bx)$ typically selects the candidate with largest score $g_f(\bx,\mathbf{c})$, cf.\ \cref{eq:fp-cos,eq:emb-cos,eq:fp-mlp,eq:emb-mlp};
    \item for bitwise or vectorwise models, $\pi_f(\bx)$ may select the candidate that maximizes a similarity to $f(\bx)$ (Tanimoto, cosine, etc.).
\end{itemize}
The induced rule $\pi_f$ can then be evaluated under any utility $U:\mathcal{Y}\times\mathcal{Y}\to[0,1]$ (HR@1, HR@k, Tanimoto, cosine, bitwise accuracy, \ldots).
Recall the definition of a Bayes-optimal decision in Eq.\ \cref{eq:bayes_opt}, then the \emph{conditional $U$-regret} of a decision rule $\pi$ at $\bx$ is
\[
\mathcal{R}_U(\pi\mid\bx)
:= \mathbb{E}\big[U(\bY,\by_U^\star(\bx)) - U(\bY,\pi(\bx)) \mid \bx\big],
\]
and the global regret is $\mathcal{R}_U(\pi) := \mathbb{E}_{\bX}[\mathcal{R}_U(\pi\mid\bX)]$. We also recall the Bayes-level (plug-in) regret of decoding with $V$ when evaluated under $U$:
\[
\mathcal{R}_U^{\mathrm{vs}\,V}(\bx)
:= \mathbb{E}\big[U(\bY,\by_U^\star(\bx)) - U(\bY,\by_V^\star(\bx)) \mid \bx\big],
\]
which has been bounded explicitly for several $(U,V)$ in \cref{sec:contrastive_as_retrieval} and \cref{app:worst_case_depending_on_m}.

\begin{lemma}[Exact decomposition]
\label{lem:regret_decomposition_exact}
For any decision rule $\pi$ and utilities $U,V$,
\begin{equation}
\label{eq:regret_exact_decomposition}
\mathcal{R}_U(\pi)
=
\mathbb{E}_{\bX}\big[\mathcal{R}_U^{\mathrm{vs}\,V}(\bX)\big]
+
\mathbb{E}_{\bX}\big[\Delta_U^{(V)}(\pi\mid\bX)\big],
\end{equation}
where
\[
\Delta_U^{(V)}(\pi\mid\bx)
:= \mathbb{E}\big[U(\bY,\by_V^\star(\bx)) - U(\bY,\pi(\bx)) \mid \bx\big].
\]
\end{lemma}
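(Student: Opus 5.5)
The identity is a telescoping (add-and-subtract) argument, done first pointwise in $\bx$ and then averaged over $\bX$. Fix $\bx$. Inside the conditional expectation defining $\mathcal{R}_U(\pi\mid\bx)$, add and subtract $U(\bY,\by_V^\star(\bx))$:
\begin{align*}
U(\bY,\by_U^\star(\bx)) - U(\bY,\pi(\bx))
&= \big[U(\bY,\by_U^\star(\bx)) - U(\bY,\by_V^\star(\bx))\big] \\
&\quad + \big[U(\bY,\by_V^\star(\bx)) - U(\bY,\pi(\bx))\big].
\end{align*}
Take conditional expectations given $\bX=\bx$ and use linearity. The first bracket gives exactly $\mathcal{R}_U^{\mathrm{vs}\,V}(\bx)$, and the second gives $\Delta_U^{(V)}(\pi\mid\bx)$. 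Hence
\[
\mathcal{R}_U(\pi\mid\bx)=\mathcal{R}_U^{\mathrm{vs}\,V}(\bx)+\Delta_U^{(V)}(\pi\mid\bx)
\]
for every $\bx$.

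Next, take $\mathbb{E}_{\bX}$ of both sides. The left-hand side becomes $\mathcal{R}_U(\pi)$ by definition. Linearity of expectation then splits the right-hand side into the two terms of \cref{eq:regret_exact_decomposition}.

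The only point needing care is that every quantity is well defined and integrable, so that linearity may be applied.
\begin{itemize}
\item Since $U$ takes values in $[0,1]$, each bracket lies in $[-1,1]$, so all conditional and outer expectations are finite.
\item Measurability in $\bx$ requires choosing $\by_U^\star(\bx)$ and $\by_V^\star(\bx)$ as measurable selections of the argmax. This is possible because the argmax is taken over the finite set $\{0,1\}^m$ (or over $\mathcal{C}(\bx)$), so one can break ties by a fixed ordering.
\end{itemize}
With these choices the decomposition is an exact identity. No inequality is involved, and there is no real obstacle beyond this bookkeeping.
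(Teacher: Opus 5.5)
Your proof is correct and is the same argument the paper gives: add and subtract $U(\bY,\by_V^\star(\bx))$ pointwise in $\bx$, use linearity of conditional expectation, then average over $\bX$. Your remarks on the boundedness of $U$ and on measurable tie-breaking in the argmax are sound bookkeeping that the paper leaves implicit.
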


\begin{proof}
For any fixed $\bx$, add and subtract $U(\bY,\by_V^\star(\bx))$:
\begin{align*}
\mathcal{R}_U(\pi\mid\bx)
&= \mathbb{E}\big[U(\bY,\by_U^\star(\bx)) - U(\bY,\pi(\bx)) \mid \bx\big] \\
&= \mathbb{E}\big[U(\bY,\by_U^\star(\bx)) - U(\bY,\by_V^\star(\bx)) \mid \bx\big]
 + \mathbb{E}\big[U(\bY,\by_V^\star(\bx)) - U(\bY,\pi(\bx)) \mid \bx\big] \\
&= \mathcal{R}_U^{\mathrm{vs}\,V}(\bx) + \Delta_U^{(V)}(\pi\mid\bx).
\end{align*}
Taking expectations over $\bX$ yields \cref{eq:regret_exact_decomposition}.
\end{proof}

The decomposition in \cref{lem:regret_decomposition_exact} is purely algebraic and does \emph{not} rely on any inequality between $U$ and $V$:
\begin{itemize}
    \item the first term, $\mathbb{E}[\mathcal{R}_U^{\mathrm{vs}\,V}(\bX)]$, is an intrinsic \emph{Bayes-level mismatch} between decoding with $U$-optimal and $V$-optimal rules;
    \item the second term, $\mathbb{E}[\Delta_U^{(V)}(\pi\mid\bX)]$, measures how much the learned rule $\pi$ deviates from the $V$-Bayes rule, as seen under the utility $U$. In theory this term can be negative, which happens when the learned rule $\pi$ with surrogate loss based on $V$ is closer to the Bayes-optimal than the $V$-Bayes rule. However, when $\ell$ is a proper or calibrated surrogate for the utility $V$, this term will be positive with very high probability. 
\end{itemize}
Our regret theorems in \cref{sec:contrastive_as_retrieval} and \cref{app:worst_case_depending_on_m} provide explicit bounds on the first (Bayes) term for concrete $(U,V)$ pairs (e.g.\ HR@1 vs.\ Tanimoto, HR@1 vs.\ bitwise accuracy). The second term is where generalization and the choice of surrogate loss enter. Numerous papers in statistical learning theory have suggested generalization bounds for the second term. Many of these bounds are independent of the particular utilities $U$ and $V$; they only depend on the complexity of the hypothesis class $\mathcal{F}$ (see e.g.\ bounds based on empirical Rademacher complexity). Let $\hat{f}_n$ be any empirical risk minimizer in $\mathcal{F}$, i.e.,
\[
\widehat{L}_n(\hat{f}_n)
= \inf_{f\in\mathcal{F}} \widehat{L}_n(f).
\]
Then, classical results in learning theory show that the \emph{excess surrogate risk} $L(\hat{f}_n)-L^\star$ scales with the complexity of $\mathcal{F}$ and decreases as $n$ grows~\citep{bartlett2006convexity}. This is the standard generalization term for the fingerprint predictor.


For concrete surrogate--metric pairs (e.g.\ BCE for bitwise accuracy, IoU loss for Tanimoto), existing calibration results relate \emph{excess surrogate risk} to \emph{metric-specific regret}. Such calibration inequalities are standard in the statistical learning literature; for instance, logistic/BCE loss is calibrated for $0/1$ accuracy \citep{bartlett2006convexity}, and similar results hold for top-$k$ probabilities \citep{yang2020consistency}. Combining these calibration inequalities with the exact decomposition Eq.~\ref{eq:regret_exact_decomposition} yields metric-dependent finite-sample bounds of the form:
\[
\mathcal{R}_U(\pi_{\hat{f}_n})
=
\underbrace{\text{Bayes mismatch between $U$ and $V$}}_{\text{analyzed in \cref{sec:contrastive_as_retrieval}}}
\;+\;
\underbrace{\text{generalization term that shrinks with $n$}}_{\text{controlled by }L(\hat{f}_n)-L^\star}.
\]

We intentionally leave this second part in a general form: the exact constants and functional form depend on $(U,V,\ell)$ and on the decoding $\pi_f$. In summary the second term depends on:
\begin{itemize}
    \item how well the surrogate-trained predictor $\hat{f}_n$ approximates the $V$-Bayes decision rule, and
    \item how changes in decisions (from $\by_V^\star(\bx)$ to $\pi_{\hat{f}_n}(\bx)$) affect the utility $U$.
\end{itemize}

\end{document}